\def\eqref#1{equation~\ref{#1}}
\def\1{\bm{1}}
\DeclareMathAlphabet{\mathsfit}{\encodingdefault}{\sfdefault}{m}{sl}
\SetMathAlphabet{\mathsfit}{bold}{\encodingdefault}{\sfdefault}{bx}{n}
\newtheorem{definition}{Definition}[section]
\newtheorem{theorem}{Theorem}[section]
\newtheorem{lemma}[theorem]{Lemma}
\newtheorem{remark}{Remark}
\newtheorem{proof}[theorem]{Proof}
\newtheorem{note}{Note}[section]
\title{APFEx: Adaptive Pareto Front Explorer for Intersectional Fairness}
\author{\name Priyobrata Mondal \email priyobrata1996\_r@isical.ac.in \\
      \addr Electronics and Communication Sciences Unit \\
        Indian Statistical Institute, Kolkata
      \AND
      \name Faizanuddin Ansari \email faizanuddin\_r@isical.ac.in \\
      \addr Electronics and Communication Sciences Unit \\
        Indian Statistical Institute, Kolkata
      \AND
      \name Swagatam Das \email swagatam.das@isical.ac.in\\
      \addr Electronics and Communication Sciences Unit \\
        Indian Statistical Institute, Kolkata
  }
\begin{document}
\maketitle

\begin{abstract}
Ensuring fairness in machine learning models is critical, especially when biases compound across intersecting protected attributes like race, gender, and age. While existing methods address fairness for single attributes, they fail to capture the nuanced, multiplicative biases faced by intersectional subgroups. We introduce \emph{Adaptive Pareto Front Explorer (APFEx)}, the first framework to explicitly model intersectional fairness as a joint optimization problem over the Cartesian product of sensitive attributes. APFEx combines three key innovations: (1) an adaptive multi-objective optimizer that dynamically switches between Pareto cone projection, gradient weighting, and exploration strategies to navigate fairness-accuracy trade-offs; (2) differentiable intersectional fairness metrics enabling gradient-based optimization of non-smooth subgroup disparities; and (3) theoretical guarantees of convergence to Pareto-optimal solutions. Experiments on four real-world datasets demonstrate APFEx’s superiority, reducing fairness violations while maintaining competitive accuracy. Our work bridges a critical gap in fair ML, providing a scalable, model-agnostic solution for intersectional fairness.
\end{abstract}

\section{Introduction}
\textbf{Beyond Accuracy—Fairness at the Intersections.} When a healthcare algorithm used across U.S. hospitals was found to systematically prioritize healthier white patients over sicker Black patients for lifesaving care, it exposed a harsh truth: even accurate machine learning models can fail basic tests of fairness \citep{barocas2023fairness}. Such cases reveal how biases embedded in data—whether based on race, gender, or income—can perpetuate discrimination when deployed at scale. But the problem runs deeper. Consider the COMPAS recidivism tool: while it appeared marginally fair when examining race or gender separately, Black women were disproportionately misclassified as high-risk, facing compounded bias at the intersection of these identities \citep{angwin2022machine}. This phenomenon—where discrimination multiplies across overlapping attributes like race, gender, and age—is the crux of \emph{intersectional fairness}, a challenge single-attribute approaches cannot solve \citep{chen2024fairness}. As AI systems increasingly govern access to loans, healthcare, and employment, ensuring fairness for \emph{all} individuals, not just broad demographic groups, becomes both a technical necessity and an ethical imperative.

\textbf{The Shortcomings of Current Fairness Techniques. }To address fairness challenges, the ML community has introduced various criteria—such as demographic parity, equalized odds, and equality of opportunity—each grounded in distinct normative principles of equitable treatment \citep{barocas2023fairness, zafar2017fairness}. These have given rise to three main approaches: (1) \emph{pre-processing} methods that adjust input data distributions to reduce bias \citep{kamiran2012data, calmon2017optimized, sattigeri2019fairness}, (2) \emph{in-processing} techniques that incorporate fairness constraints directly into model training \citep{zafar2017fairness, celis2019classification, lohaus2020too}, and (3) \emph{post-processing} strategies that modify model outputs to achieve fairness \citep{hardt2016equality, chzhen2019leveraging, chierichetti2017fair}. Despite significant progress, these methods predominantly adopt a \emph{single-attribute} perspective, treating sensitive features (e.g., race, gender) in isolation and enforcing fairness constraints independently using additive penalties. This reductionist framing fails in \emph{multi-attribute} contexts, where bias arises from the intersection of protected dimensions. As shown by \citep{angwin2022machine}, models that appear fair when evaluated separately on race and gender can still severely discriminate against intersectional groups, such as Black women. The core issue is that single-attribute fairness lacks the representational capacity to capture complex, higher-order discrimination patterns—precisely where the most harmful biases often occur.

\textbf{The challenge of intersectional fairness is inherently multi-objective.} Algorithmic fairness requires optimizing predictive accuracy alongside fairness metrics that often conflict \citep{Martinez2020}. Recent work has framed this as a multi-objective problem: \citet{valdivia2021fair} use evolutionary algorithms; \citet{milojkovic2019multi} and \citet{cotter2019two} adopt multi-gradient and game-theoretic methods—but all treat sensitive attributes independently and rely on scalarized losses, obscuring intersectional effects. \citet{padh2021addressing} proposed a model-agnostic, differentiable framework, yet combined fairness penalties additively. \citet{Martinez2020} introduced static Pareto optimization without dynamic trade-off navigation, and \citet{nagpal2024multi} jointly optimized group and individual fairness using ROC, but without compositional modeling. Crucially, no existing work captures the non-additive structure of intersectional bias or offers dynamic control over the accuracy–fairness Pareto front.

While these efforts mark meaningful progress, they leave a critical research gap—they stop short of capturing the compositional complexity of intersectional bias. The inability to model fairness over the full joint distribution of sensitive features leaves existing methods blind to the most vulnerable subgroups. \emph{To our knowledge, this work is the first to explicitly model and optimize fairness over the Cartesian product of protected attributes—directly addressing intersectional bias as a compositional, multi-objective problem.}

\textbf{Our work bridges this gap through four key advances.} First, we reformulate intersectional fairness as a \emph{joint optimization} over the Cartesian product of sensitive attributes, capturing compounded biases via subgroup-wise constraints (e.g., ensuring parity for Black women directly, not just marginally for race or gender). Second, we introduce \textit{APFEx (\underline{A}daptive \underline{P}areto \underline{F}ront \underline{Ex}plorer)}, an adaptive multi-objective optimizer that dynamically switches between: (1) \emph{Pareto cone projection} for aligned objectives, (2) \emph{gradient weighting} for imbalanced improvements, and (3) \emph{calibrated exploration} to escape local optima—theoretically guaranteed to converge to Pareto-stationary solutions (\autoref{thm:pareto_convergence}). Third, we develop \emph{differentiable intersectional metrics} that enable gradient-based optimization of otherwise non-smooth subgroup fairness (e.g., hyperbolic tangent relaxations for demographic parity). Fourth, we also introduced a generalized fairness loss that handles multi-class classification with intersectional fairness (\autoref{multiclass-loss}).

\section{Related Work}

Fairness in machine learning has traditionally been studied under three broad methodological families: pre-processing, in-processing, and post-processing interventions. While these approaches have yielded important insights, they remain limited in their ability to address \emph{intersectional fairness}, i.e., fairness across subgroups defined by joint combinations of multiple sensitive attributes. Below we review these categories before positioning our work, which directly tackles intersectional fairness through a multi-objective optimization framework with adaptive Pareto front exploration.

\noindent \textbf{Pre-Processing and Post-Processing Approaches.}  
Pre-processing methods aim to mitigate bias by modifying the input distribution prior to training. Early work by \citet{kamiran2012data} adjusted class labels or sample weights to suppress discriminatory patterns. Later contributions introduced probabilistic and generative data transformations for fairness-preserving representations \citep{calmon2017optimized, sattigeri2019fairness}. While these methods are model-agnostic, they often misalign with a learner’s inductive bias, limiting their generalizability.  

Post-processing methods, in contrast, operate on model outputs. \citet{hardt2016equality} proposed threshold calibration for equalized opportunity, while \citet{chzhen2019leveraging} refined this idea through semi-supervised calibration. These approaches are lightweight and modular, but their corrective power is bounded by the representational capacity of the underlying model, making them ineffective when bias is deeply encoded in learned features. Importantly, both pre- and post-processing strategies typically focus on a single sensitive attribute, offering no principled mechanism for handling intersectional subgroups.

\noindent \textbf{In-Processing and Relaxation-Based Methods.}  
In-processing approaches directly integrate fairness constraints into model training, often by formulating the task as a constrained optimization problem. A common strategy replaces non-differentiable fairness indicators with differentiable relaxations. \citet{zafar2017fairness} proposed covariance-based relaxations for demographic parity and equalized opportunity, yielding convex or convex–concave formulations. \citet{donini2018empirical} adopted an empirical risk minimization approach under fairness constraints, and \citet{celis2019classification} introduced a meta-algorithm that can optimize classification subject to multiple fairness metrics using estimated conditional probabilities.

However, relaxations can sometimes be too permissive: \citet{lohaus2020too} demonstrated that models may satisfy relaxed constraints while violating fairness in the true sense. Their search-based regularization approach offers stronger guarantees but incurs substantial computational cost. To improve scalability, \citet{padh2021addressing} framed fairness as a multi-objective problem and introduced a differentiable relaxation using hyperbolic tangent functions. This enabled simultaneous optimization of multiple fairness definitions across sensitive attributes. Yet, their method still treated attributes separately and failed to capture their joint interactions. Similarly, \citet{nagpal2024multi} proposed a reject-option classification framework optimized with NSGA-II and particle swarm algorithms, achieving improved accuracy–fairness trade-offs but without explicit modeling of fairness over the joint distribution of protected attributes.

Overall, in-processing methods have provided more principled mechanisms for balancing fairness and accuracy, but most remain tied to single-attribute settings or require modifications for each fairness metric.
\noindent \textbf{Multi-Objective and Intersectional Fairness.}  
A growing line of research has considered fairness explicitly as a multi-objective problem. \citet{valdivia2021fair} applied evolutionary algorithms to optimize both fairness metrics and predictive accuracy, while \citet{milojkovic2019multi} and \citet{cotter2019two} developed multi-gradient descent and game-theoretic formulations for non-convex fairness–accuracy optimization. Despite these advances, most existing methods either treat sensitive attributes separately or rely on scalarized losses that obscure the performance of intersectional subgroups.

Several works specifically address intersectional fairness. For instance, \citet{foulds2018intersectional} propose a Bayesian probabilistic modeling approach that enables data-efficient estimation of fairness across multiple protected attributes by handling sparsity and statistical uncertainty. However, their focus lies on measurement and estimation, rather than on optimizing classifier decisions under intersectional fairness constraints. Another relevant contribution is by \citet{mickel2023importance},
who develop both auditing and post-processing techniques for intersectional fairness in classification. Their methods extend single-attribute fairness metrics to the intersectional setting and offer robust, model-agnostic bias mitigation. Yet, these are post-hoc corrections, lacking an integrated optimization framework.

\noindent \textbf{Our Contribution.}  
Based on the above category that we have discussed our method lies in the category of in-processing and multi-objective optimization. Within our in-processing framework, we integrate fairness constraints associated with sensitive attributes into the model training phase, typically by framing the task as a constrained optimization problem. To the best of our knowledge, no prior work explicitly formulates fairness over the \emph{joint sensitive attribute space} within a unified, differentiable, and multi-objective optimization framework. We bridge this gap by constructing composite subgroup labels from sensitive attribute tuples and jointly optimizing \emph{performance loss} and \emph{fairness loss} using an adaptive Pareto front explorer. Unlike scalarization-based or post-hoc methods, our approach preserves the trade-off structure explicitly, enabling more nuanced control over the fairness–accuracy frontier. Moreover, it is model-agnostic and readily integratable into standard supervised learning pipelines. This makes it the first method to systematically address intersectional fairness via principled multi-objective optimization, producing Pareto-optimal classifiers that respect fairness across all defined subgroups simultaneously.

\section{Methodology}

This section develops a principled optimization framework for jointly minimising prediction error and intersectional fairness violations. We begin by formally stating the problem and explaining why the particular mathematical objects we introduce are needed. We then recall key preliminaries from multi-objective optimization (MOO) that underpin our method, present the \emph{Adaptive Pareto Front Explorer (APFEx)} framework and its core components, and finish with the concrete differentiable losses we use to encode fairness constraints.
\subsection{Problem Definition}
We consider a multi-objective optimization problem where we seek to balance predictive accuracy with fairness constraints across intersectional subgroups. Let $\mathcal{D} = \{(x_i, a_i, y_i)\}_{i=1}^n$ denote our dataset, where $x_i \in \mathbb{R}^d$ represents the feature vector, $a_i = (a_i^{(1)}, \ldots, a_i^{(K-1)}) \in \mathcal{A}_1 \times \cdots \times \mathcal{A}_{K-1}$ represents the vector of sensitive attributes having nominal values with feature space as $\mathcal{A}_i \in \{ -1, 1\}$, and $y_i \in \{-1, 1\}$ is the binary label. Consider a probability distribution of the training samples $\mathbb{S} = (\mathcal{X}, \mathcal{A}, \mathcal{Y})$ be ${\mathbb{P}}^{\mathbb{S}}$, where each $\{x_i,a_{i1},\cdots,a_{i(k-1)},y_i\}$ is sampled \textit{i.i.d} from ${\mathbb{P}}^{\mathbb{S}}$.

The intersectional fairness problem requires us to consider fairness across the Cartesian product of sensitive attributes. For $K-1$ sensitive attributes, we define the set of intersectional groups as:
$$\mathcal{G} = \mathcal{A}_1 \times \mathcal{A}_2 \times \cdots \times \mathcal{A}_{K-1}$$

Each intersectional group $g \in \mathcal{G}$ represents a unique combination of sensitive attribute values, capturing the compounded bias that individuals at intersections may face. The learning problem we address is inherently multi-objective: we must trade off a predictive objective and fairness objectives defined across the intersectional groups. In the following definition~\ref{def1}, we have formally defined this.


\begin{definition}[Multi-Objective Intersectional Fairness Problem]
\label{def1}
Given a parametric model $f_\theta: \mathbb{R}^d \to \mathbb{R}$ with parameters $\theta \in \Theta \subseteq \mathbb{R}^p$, we seek to solve:
\begin{align}
\min_{\theta \in \Theta} \mathbf{L}(\theta) = \begin{bmatrix} L_{\text{task}}(\theta) \\ L_{\text{fairness}}(\theta) \end{bmatrix}
\end{align}
where $L_{\text{task}}(\theta) = \mathbb{E}_{(x,y) \sim \mathcal{D}}[\ell(f_\theta(x), y)]$ measures predictive performance, and $L_{\text{fairness}}(\theta)$ captures fairness violations across intersectional groups. 
\end{definition}
\begin{note}
We have K objective functions, one for the task prediction and $(K-1)$ for $(K-1)$ sensitive attributes.
\end{note}
\subsection{Preliminaries}
Before introducing APFEx, we briefly recall MOO concepts that motivate our method for fairness-aware training.
\subsubsection{Multi-objective optimization (MOO)}
\emph{Multi-objective optimization (MOO)} addresses problems where multiple, potentially conflicting objectives must be simultaneously optimized. Unlike single-objective optimization, the solution to a multi-objective problem is typically not a single point but rather a set of points representing various trade-offs among the objectives.
We define the Multi-objective optimization as: 
\begin{equation}
    \underset{x_i\in \mathcal{X}}{min}\text{ } \mathbb{F}(x_i) = [ f_1(x_i),f_2(x_i),\cdots,f_K(x_i) ]^T
\end{equation}
where $\mathbf{x} = [x_1, x_2, \ldots, x_n]^T$ is the decision vector, $\mathcal{X}\subseteq \mathbb{R}^d$ is the feasible region, and $\mathbb{F}: \mathcal{X} \rightarrow \mathbb{R}^K$ is the objective function vector with $K$ individual objective functions $f: \mathcal{X} \rightarrow \mathbb{R}$, $f_i(x_j)$ is the loss function for the $i^{th}$ objective. 

\subsubsection{Pareto dominance and Pareto front}
At the core of multi-objective optimization is Pareto optimality, which formalizes the notion of trade-offs between competing objectives. To begin formalizing this framework, we first define a concept that allows us to compare solutions based on their performance across multiple objectives.
\begin{definition}[\textbf{Pareto Dominance}]
A solution $f(x^1)$ is said to dominate another solution $\mathbf{x}^{2}$ (denoted as $\mathbf{x}^{1} \prec \mathbf{x}^{2}$) iff:
\begin{itemize}[noitemsep,topsep=0pt]
    \item[1.] $\forall i \in \{1,2,\ldots,K\}: f_i(x^{1}) \leq f_i(x^2)$, and
    \item[2.] $\exists j \in \{1,2,\ldots,K\}: f_j(x^1) < f_j(x^2)$
\end{itemize}
\end{definition}
This notion of dominance allows us to identify solutions that are better than others in at least one objective and no worse in all others. Building on this, we can define the concept of Pareto optimality, which characterizes solutions that are not dominated by any other.
\begin{definition}[\textbf{Pareto Optimality}]: A solution $x^*$ is Pareto optimal if there exists no other solution $x \in \mathcal{X}$ such that $f(x)\prec x^{*}$.
\end{definition}
The collection of all Pareto optimal solutions forms a special set in the objective space, which we define next.
\begin{definition}[\textbf{Pareto Front}]: The set of all Pareto optimal solutions in the objective space, denoted as $PF = \{F(x) \mid x \in PS\}$, where $PS$ is the set of all Pareto optimal solutions in the decision space.
\end{definition}



These concepts are central to fairness-aware learning because they let us characterise the set of classifiers that achieve different accuracy–fairness trade-offs rather than forcing a single scalarised compromise. To solve a multi-objective optimization problem, we aim to identify or approximate the Pareto front. Consider $K$ objective functions $f_1(\mathbf{w}), f_2(\mathbf{w}), \ldots, f_K(\mathbf{w})$ with a parameter vector $\mathbf{w} \in \mathbb{R}^d$. A fundamental challenge is to identify a direction in parameter space that reduces all objective functions simultaneously.
This leads us to the concept of Pareto stationarity, which provides a necessary condition for a solution to be Pareto optimal in continuous spaces.
\begin{definition}[\textbf{Pareto Stationary \cite{Fliege2000}}]
A point $\mathbf{x} \in \Omega$ is Pareto stationary if there does not exist a feasible direction $\mathbf{d} \in \mathbb{R}^n$ such that:
\begin{equation}
    \nabla f_i(x)^Td < 0, \forall i\in \{1,2,\cdots, K\}.
\end{equation}    
\end{definition}
The following Theorem~\ref{thm1} gives a convenient geometric characterisation that we exploit algorithmically: Pareto stationarity holds iff the zero vector lies in the convex hull of the gradients.

\begin{theorem}
\label{thm1}
A point $\mathbf{x} \in \Omega$ is Pareto stationary\cite{Fliege2000} iff $\mathbf{0} \in \text{conv}({\nabla f_1(\mathbf{x}), \nabla f_2(\mathbf{x}), \ldots, \nabla f_K(\mathbf{x})})$, where $\text{conv}$ denotes the convex hull.
\end{theorem}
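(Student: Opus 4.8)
The plan is to prove the two implications separately using elementary convex analysis; the statement is essentially Gordan's theorem of the alternative, but I would give a self-contained argument via the separating-hyperplane theorem so that nothing external is needed.

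For the ``easy'' direction ($\mathbf{0}\in\mathrm{conv}\{\nabla f_i(\mathbf{x})\}\Rightarrow$ Pareto stationary), I would write $\mathbf{0}=\sum_{i=1}^{K}\lambda_i\nabla f_i(\mathbf{x})$ with $\lambda_i\ge 0$ and $\sum_{i}\lambda_i=1$, and then argue by contradiction. Suppose some feasible $\mathbf{d}$ satisfied $\nabla f_i(\mathbf{x})^{\top}\mathbf{d}<0$ for every $i$. Pairing the convex-combination identity with $\mathbf{d}$ gives $0=\sum_{i}\lambda_i\big(\nabla f_i(\mathbf{x})^{\top}\mathbf{d}\big)$. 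Since the $\lambda_i$ are nonnegative and sum to one (so at least one is strictly positive) while each inner product is strictly negative, the right-hand side is strictly negative, a contradiction. Hence no common descent direction exists and $\mathbf{x}$ is Pareto stationary.

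For the converse I would argue by contraposition: assuming $\mathbf{0}\notin C:=\mathrm{conv}\{\nabla f_1(\mathbf{x}),\ldots,\nabla f_K(\mathbf{x})\}$, I would construct a descent direction. The key structural fact is that $C$, being the convex hull of finitely many points, is compact and convex, so the strict separating-hyperplane theorem applies to the point $\mathbf{0}\notin C$: there exist a vector $\mathbf{a}$ and a scalar $b>0$ with $\langle\mathbf{a},\mathbf{g}\rangle\ge b$ for all $\mathbf{g}\in C$, and in particular $\langle\mathbf{a},\nabla f_i(\mathbf{x})\rangle\ge b>0$ for each $i$. Setting $\mathbf{d}=-\mathbf{a}$ then yields $\nabla f_i(\mathbf{x})^{\top}\mathbf{d}\le -b<0$ for all $i$, i.e.\ a common strict descent direction, so $\mathbf{x}$ fails to be Pareto stationary.

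The main obstacle is the separation step: one must invoke the \emph{strict} (not merely weak) separating-hyperplane theorem, which requires $C$ to be closed, and this is exactly what compactness of the convex hull of a finite set delivers. A second point worth flagging is the meaning of ``feasible direction'': the argument above treats every $\mathbf{d}\in\mathbb{R}^n$ as admissible, which is correct when $\mathbf{x}$ lies in the interior of $\Omega$ (e.g.\ $\Omega=\mathbb{R}^n$). If $\Omega$ were constrained and $\mathbf{x}$ lay on its boundary, one would replace $C$ by the Minkowski sum of $C$ with the normal cone of $\Omega$ at $\mathbf{x}$ and run the identical separation argument against that larger closed convex set.
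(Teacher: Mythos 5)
Your proof is correct and follows essentially the same route as the paper's: the separating-hyperplane theorem yields the common descent direction when $\mathbf{0}\notin\mathrm{conv}\{\nabla f_i(\mathbf{x})\}$, and the convex-combination identity paired with an arbitrary test direction rules out a common descent direction when $\mathbf{0}$ is in the hull. Your extra care about needing \emph{strict} separation (justified by compactness of the hull of finitely many points) and about feasible directions when $\mathbf{x}$ lies on the boundary of $\Omega$ tightens two points the paper leaves implicit, but the underlying argument is the same.
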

\begin{proof}
    The proof is provided in Appendix~\ref{thm_hull_conv}.
\end{proof}
Building on Pareto optimality and stationarity, we propose the \emph{\underline{A}daptive \underline{P}areto \underline{F}ront \underline{Ex}plorer (APFEx)}—a multi-objective framework that dynamically explores the Pareto front and manages trade-offs between competing objectives. APFEx extends gradient-based methods by combining multiple descent strategies with an adaptive selection mechanism, and explicitly formulates intersectional fairness as an optimization objective to jointly minimize prediction error and subgroup-level fairness violations.



\subsection{Proposed Methodology}
Building on the above preliminaries, we propose the \emph{Adaptive Pareto Front Explorer (APFEx)}: a dynamic multi-objective optimisation framework tailored to balance predictive performance and intersectional fairness. APFEx maintains gradient-scale invariance, adaptively chooses among complementary descent strategies, and injects controlled exploration when optimisation stalls. Below we present the components and their motivations, followed by convergence results. Following our discussions on Gradient Normalization~\ref{grad_norm_scale} and the APFEx framework~\ref{apfex_framework},  
we next reviewed the Existing Loss Function using Differential Metric Approximation in  
Subsection~\ref{existing_loss_diff}.  
Furthermore, in Section~\ref{multiclass-loss}, we presented the Fairness-Constrained Loss Function,  
which is formulated through a Differentiable Fairness Metric Approximation.
\subsubsection{Gradient Normalization and Scale Invariance}
\label{grad_norm_scale}
Different objectives (e.g., cross-entropy and group-level disparity) can vary greatly in magnitude, so directly aggregating their gradients may allow large-scale losses to dominate. To ensure a fair comparison, we estimate an empirical scale for each objective and normalize the gradients accordingly.

\begin{definition}[Empirical Loss Scale Estimation]
For each objective $k \in \{1, \ldots, K\}$, we estimate the empirical maximum:
$$\hat{L}_k^{\max} = \max_{(x,y) \in \mathcal{D}_{\text{train}}} \ell_k(f_\theta(x), y) + \epsilon$$
where $\epsilon > 0$ is a small regularization constant to prevent division by zero.
\end{definition}

The normalized gradient for objective $k$ is then computed as:
$$\tilde{\nabla}_\theta L_k(\theta) = \frac{\nabla_\theta L_k(\theta)}{\hat{L}_k^{\max}}$$

Normalisation preserves the geometry of each objective while preventing numerical dominance by any single loss. This step is used before forming aggregated descent vectors in APFEx.

\subsubsection{APFEx Framework}
\label{apfex_framework}
To optimize the trade-off between predictive performance and intersectional fairness, we propose APFEx (Figure~\ref{fig:apex_mot}): a dynamic, multi-objective framework. We first present its core descent strategies (PCP, AW, PSS), followed by the adaptive mechanism for strategy selection, constraint handling, and normalization. We then unify these components into a differentiable loss with theoretical convergence guarantees, demonstrating APFEx’s ability to achieve Pareto-optimal solutions while preserving intersectional fairness. In the subsequent Sections ~\hyperref[sub_des]{3.3.2.1} and ~\hyperref[sub_con]{3.3.2.2}, we described the different Strategies of the APFEx framework and the convergence analysis of our APFEx framework, respectively.

%
\begin{figure}[!ht]
    \centering
    \includegraphics[width=0.6\linewidth]{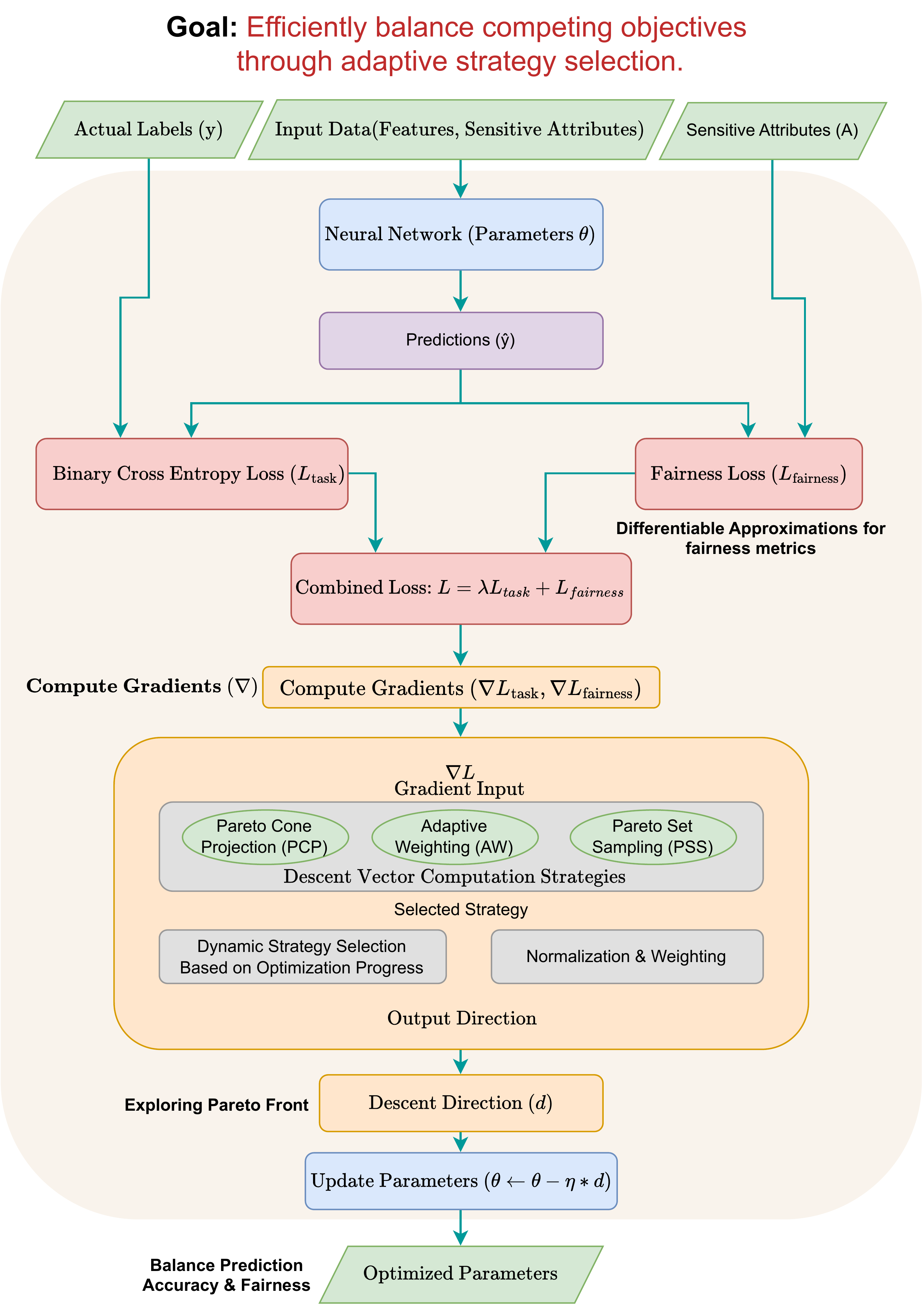}
    \caption{Figure illustrates the workflow of Adaptive Pareto Front Explorer (APFEx). }
    \label{fig:apex_mot}
\end{figure}
\paragraph{3.3.2.1 Descent Vector Computation Strategies}
\label{sub_des}
The core innovation of APFEx lies in its adaptive selection among three complementary descent strategies, each optimized for different optimization landscape characteristics. In Subsections \hyperref[pcp_stretegy]{(i)}, \hyperref[aw_stretegy]{(ii)}, and \hyperref[pss_stretegy]{(iii)}, we have discussed the PCP, AW, and PSS strategy, respectively, and we have presented the detailed algorithm in Algorithm~ref{alg:apfex}.

\noindent
\subparagraph{(i) Pareto Cone Projection (PCP)} 
\label{pcp_stretegy}
This strategy projects gradients onto the Pareto cone~\citep{Fliege2000} to ensure descent for all objectives simultaneously, thereby ensuring Pareto efficiency. We have defined the Pareto cone projection strategy in definition~\ref{defpcp}.





\begin{definition}[Pareto Cone~\citep{Fliege2000}]
\label{defpcp}
Given gradients $\{\nabla L_k(\theta)\}_{k=1}^K$, the Pareto cone $\mathcal{C}(\theta)$ is defined as:
$$\mathcal{C}(\theta) = \left\{d \in \mathbb{R}^p : \langle \nabla L_k(\theta), d \rangle \leq 0, \; \forall k \in \{1, \ldots, K\}\right\}$$
\end{definition}

The PCP strategy computes a convex combination of gradients by solving the following quadratic programming problem~\citep{desideri2012multiple}:

\begin{align}
\min_{\alpha \in \Delta^{K-1}} &\quad \frac{1}{2} \alpha^T G \alpha \\
\text{subject to} &\quad \sum_{k=1}^K \alpha_k = 1, \quad \alpha_k \geq 0, \; k = 1, \ldots, K
\end{align}

where $G \in \mathbb{R}^{K \times K}$ is the Gram matrix with entries $G_{ij} = \langle \nabla L_i(\theta), \nabla L_j(\theta) \rangle$, and $\Delta^{K-1}$ denotes the probability simplex.


The quadratic program above selects an optimal convex weight vector \(\alpha^*\in\Delta^{K-1}\) that minimises the squared norm of the aggregated gradient
\[
g^*(\theta)\;=\;\sum_{k=1}^K \alpha_k^* \nabla L_k(\theta).
\]
Intuitively, \(g^*(\theta)\) is the ``best compromise'' gradient obtained by convexly combining per-objective gradients. Two important remarks follow. First, if the convex hull of \(\{\nabla L_k(\theta)\}_{k=1}^K\) contains the origin, then the QP minimum is achieved at \(g^*(\theta)=\mathbf{0}\); this is precisely the Pareto-stationary case (no common descent direction exists). Second, when \(g^*(\theta)\neq\mathbf{0}\) we use the unit vector aligned with \(-g^*(\theta)\) as the PCP descent direction:
\[
d_{\mathrm{PCP}} \;=\; -\frac{g^*(\theta)}{\|g^*(\theta)\|}
\;=\; -\frac{\sum_{k=1}^K \alpha_k^* \nabla L_k(\theta)}
{\Big\|\sum_{k=1}^K \alpha_k^* \nabla L_k(\theta)\Big\|}.
\]
This normalisation yields a direction of descent (when one exists) while decoupling step-size choice from the aggregated gradient magnitude. Theorem~\ref{thm_pcp_opt} below characterises the optimiser \(\alpha^*\) algebraically via KKT conditions, and Lemma~\ref{pcp_descent} then connects that algebraic description to the geometric Pareto cone property.

The following theorem characterises the optimal \(\alpha^*\) via KKT conditions and clarifies the structure of active components in the convex combination.

\begin{theorem}[PCP Optimality Conditions]
\label{thm_pcp_opt}
The solution $\alpha^*$ to the PCP optimization problem satisfies the Karush-Kuhn-Tucker (KKT) conditions:
\begin{align}
G\alpha^* + \mu^* \mathbf{1} - \nu^* &= 0 \\
\sum_{k=1}^K \alpha_k^* &= 1 \\
\alpha_k^* &\geq 0, \quad \nu_k^* \geq 0, \quad \alpha_k^* \nu_k^* = 0, \; \forall k
\end{align}
where $\mu^*$ and $\nu^* = (\nu_1^*, \nu_2^*, \ldots, \nu_K^*)$ are the Lagrange multipliers.
\end{theorem}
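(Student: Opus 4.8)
The plan is to recognise that the PCP subproblem is a \emph{convex} quadratic program over the probability simplex and then invoke the standard Karush--Kuhn--Tucker machinery. First I would establish convexity: since $G$ is a Gram matrix of gradients, for every $\alpha$ we have $\alpha^T G \alpha = \big\|\sum_{k=1}^K \alpha_k \nabla L_k(\theta)\big\|^2 \ge 0$, so $G \succeq 0$ and the objective $\tfrac12 \alpha^T G \alpha$ is convex. The feasible set $\Delta^{K-1}$ is cut out by a single affine equality $\sum_k \alpha_k = 1$ together with the $K$ affine inequalities $-\alpha_k \le 0$; it is nonempty, convex, and compact, so a minimiser $\alpha^*$ exists by the Weierstrass theorem.

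Next I would introduce a multiplier $\mu^* \in \mathbb{R}$ for the equality constraint and multipliers $\nu^* = (\nu_1^*,\ldots,\nu_K^*)$ for the inequalities, and form the Lagrangian
$$\mathcal{L}(\alpha,\mu,\nu) = \tfrac12 \alpha^T G \alpha + \mu\Big(\textstyle\sum_{k=1}^K \alpha_k - 1\Big) - \sum_{k=1}^K \nu_k \alpha_k .$$
Differentiating in $\alpha$ and setting the result to zero yields the stationarity relation $G\alpha^* + \mu^* \mathbf{1} - \nu^* = 0$, which is the first displayed equation. Primal feasibility reproduces $\sum_k \alpha_k^* = 1$ and $\alpha_k^* \ge 0$; dual feasibility gives $\nu_k^* \ge 0$; and complementary slackness for each inequality constraint gives $\alpha_k^* \nu_k^* = 0$. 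Together these are exactly the claimed KKT conditions.

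The one genuinely substantive point is justifying that KKT stationarity \emph{must} hold at the optimum, which hinges on a constraint qualification. I would note that because all constraints here are affine, the linearity (Abadie) constraint qualification is satisfied automatically, so Slater's condition is \emph{not} needed and the KKT system is necessary at any local minimiser---in particular at $\alpha^*$. I expect this CQ remark to be the main obstacle to state carefully; the remaining steps are routine. As a bonus, convexity of the objective together with affine constraints makes the conditions sufficient as well, so any $(\alpha^*,\mu^*,\nu^*)$ satisfying them is in fact a global minimiser. Since $G$ is only positive semidefinite, $\alpha^*$ need not be unique, but uniqueness is not asserted by the theorem and so need not be addressed.
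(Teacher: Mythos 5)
Your proposal is correct and follows essentially the same route as the paper's proof: both recognise the PCP subproblem as a convex QP with affine constraints, form the Lagrangian with multipliers $\mu$ and $\nu$, and read off stationarity, primal/dual feasibility, and complementary slackness, with a constraint qualification (you cite the linearity/Abadie CQ, the paper cites LICQ) guaranteeing necessity and convexity guaranteeing sufficiency. Your additional remarks on existence via Weierstrass and on non-uniqueness are minor refinements, not a different argument.
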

\begin{proof}
    Proof of  this theorem is provided in Appendix section~\ref{pf_pcp_opt}
\end{proof}
\begin{remark}
Theorem~\ref{thm_pcp_opt} establishes the precise optimality conditions for the PCP formulation by deriving its Karush–Kuhn–Tucker (KKT) system. This proof serves two key purposes. First, it confirms that the PCP optimization problem is a well-posed convex problem, where the global solution is fully characterized by the derived KKT conditions. Second, by identifying the structural interplay between the Gram matrix $G$, the simplex constraint, and the non-negativity requirements, the proof provides the theoretical foundation upon which our proposed method builds. In particular, the complementary slackness condition reveals how the selection of active strategies emerges naturally from the optimization, thereby justifying the principled design of our approach.
\end{remark}


Since \(G\) is a Gram matrix it is positive semi-definite and the QP is convex; therefore the KKT conditions are necessary and sufficient for optimality. The stationarity equation \(G\alpha^*+\mu^*\mathbf{1}-\nu^*=0\) shows how \(\alpha^*\) balances the pairwise geometry of the gradients (via \(G\)) against the simplex constraints. Complementary slackness \(\alpha_k^*\nu_k^*=0\) identifies the active set: indices with \(\alpha_k^*>0\) contribute to the aggregated gradient, while indices with \(\alpha_k^*=0\) are excluded.

Having characterised \(\alpha^*\), we next connect the QP solution to descent in the Pareto cone. The lemma below states the crucial property that if the Pareto cone has interior directions then PCP indeed produces a vector inside that cone — i.e., a common descent direction that improves all objectives simultaneously.

\begin{lemma}[PCP Descent Property]
\label{pcp_descent}
If the Pareto cone $\mathcal{C}(\theta)$ has non-empty interior, then $d_{\text{PCP}} \in \mathcal{C}(\theta)$, guaranteeing simultaneous improvement in all objectives.
\end{lemma}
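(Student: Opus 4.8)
The plan is to reduce the set-membership claim to a family of scalar inequalities and then read those inequalities directly off the KKT system established in Theorem~\ref{thm_pcp_opt}. Since $d_{\mathrm{PCP}} = -g^*(\theta)/\|g^*(\theta)\|$ with $g^*(\theta) = \sum_{k} \alpha_k^* \nabla L_k(\theta)$, and since scaling by the positive number $1/\|g^*\|$ does not change signs, the membership $d_{\mathrm{PCP}}\in\mathcal{C}(\theta)$ is equivalent to $\langle \nabla L_k(\theta), g^*(\theta)\rangle \ge 0$ for every $k$. So the whole lemma rests on showing that the aggregated gradient $g^*$ has non-negative inner product with each individual gradient.

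First I would observe that the $k$-th coordinate of $G\alpha^*$ is exactly $\langle \nabla L_k(\theta), g^*(\theta)\rangle$, because $(G\alpha^*)_k = \sum_j \langle \nabla L_k, \nabla L_j\rangle\,\alpha_j^* = \langle \nabla L_k,\ \sum_j \alpha_j^* \nabla L_j\rangle$. The stationarity condition $G\alpha^* + \mu^*\mathbf{1} - \nu^* = 0$ then reads coordinatewise as $\langle \nabla L_k, g^*\rangle = \nu_k^* - \mu^*$. Next I would pin down $\mu^*$ by taking the inner product of the stationarity equation with $\alpha^*$ and using $\langle \alpha^*, G\alpha^*\rangle = \|g^*\|^2$, the simplex constraint $\sum_k \alpha_k^* = 1$, and complementary slackness $\sum_k \alpha_k^* \nu_k^* = 0$; this yields $\|g^*\|^2 + \mu^* = 0$, i.e. $\mu^* = -\|g^*\|^2$. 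Substituting back gives $\langle \nabla L_k, g^*\rangle = \nu_k^* + \|g^*\|^2 \ge \|g^*\|^2 \ge 0$ for all $k$, since each $\nu_k^* \ge 0$. This already establishes $d_{\mathrm{PCP}}\in\mathcal{C}(\theta)$. As a sanity check, the same inequality follows from the projection-theoretic view: minimizing $\tfrac12\alpha^\top G\alpha$ over the simplex is minimizing $\|v\|^2$ over $v\in\mathrm{conv}(\{\nabla L_k\})$, so $g^*$ is the projection of the origin onto that hull and the obtuse-angle property gives $\langle g^*, \nabla L_k\rangle \ge \|g^*\|^2$.

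Finally I would use the non-empty-interior hypothesis to upgrade the weak inequalities to strict descent and so confirm simultaneous improvement. Non-empty interior of $\mathcal{C}(\theta)$ means there exists a direction that is strictly negative against every gradient, so $\theta$ is not Pareto stationary; by the geometric characterization of Theorem~\ref{thm1} this forces $\mathbf{0}\notin\mathrm{conv}(\{\nabla L_k(\theta)\})$, hence $g^*\neq\mathbf{0}$. The normalization in $d_{\mathrm{PCP}}$ is then well-defined and $\langle \nabla L_k, d_{\mathrm{PCP}}\rangle = -\langle \nabla L_k, g^*\rangle/\|g^*\| \le -\|g^*\| < 0$ for every $k$, so $d_{\mathrm{PCP}}$ is a common strict-descent direction lying in the interior of the cone.

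The routine parts here are the coordinatewise reading of the KKT stationarity equation and the one-line contraction against $\alpha^*$. The genuinely load-bearing step is justifying $g^*\neq\mathbf{0}$ from the interior hypothesis: this is where I must invoke Theorem~\ref{thm1} to translate the geometric ``non-empty interior'' condition into ``origin not in the convex hull,'' since that is exactly what legitimizes the normalization and turns the non-strict inequality $\langle \nabla L_k, g^*\rangle \ge 0$ into the strict descent guarantee the lemma claims.
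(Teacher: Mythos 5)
Your proof is correct and follows essentially the same route as the paper's: read the KKT stationarity equation coordinatewise, contract it against $\alpha^*$ using the simplex constraint and complementary slackness to determine the sign of $\mu^*$, conclude $\langle \nabla L_k(\theta), g^*\rangle \ge 0$ for all $k$, and use the non-empty-interior hypothesis to rule out $g^*=\mathbf{0}$ so the normalization is legitimate. The only differences are cosmetic improvements on your side: you identify $\mu^*=-\|g^*\|^2$ explicitly, which upgrades the paper's conclusion of non-increasing first-order change to the strict bound $\langle \nabla L_k, d_{\mathrm{PCP}}\rangle \le -\|g^*\|<0$, and you route the non-vanishing of $g^*$ through Theorem~\ref{thm1} where the paper argues it directly from the interior direction $u$.
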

\begin{proof}
    The proof is provided in Appendix section~\ref{proof_pcp_descent}.
\end{proof}
Theorem~\ref{thm_pcp_opt} provides the algebraic description of the optimiser \(\alpha^*\) and explains which objectives actively shape the aggregated gradient. Lemma~\ref{pcp_descent} then leverages that optimiser to show the geometric consequence: when a common descent direction exists (i.e., Pareto cone interior is non-empty), the QP solution yields a descent vector inside the cone and thus produces simultaneous improvement. Together, they justify PCP as a principled method to exploit shared descent directions when they are present.

\noindent
\subparagraph{(ii) Adaptive Weighting (AW)~\cite{sener2018multi}} 
\label{aw_stretegy} This strategy dynamically adjusts weights based on relative improvement rates across objectives. 
When objectives show imbalanced improvement rates, AW dynamically redistributes attention to underperforming objectives. 
AW implements this idea by tracking recent relative improvements and converting them into weights.
We have calculated the relative improvement rate in the following Definition~\ref{def_ir}
\begin{definition}[Relative Improvement Rate]
\label{def_ir}
For objective $k$ at iteration $t$, the relative improvement rate is:
$$\rho_k^{(t)} = \frac{L_k^{(t-1)} - L_k^{(t)}}{\max(L_k^{(t-1)}, \epsilon)}$$
where $\epsilon > 0$ prevents division by zero.
\end{definition}
Using the relative improvement rate, AW strategy updates weights according to an exponential reweighting scheme:
\begin{align}
\alpha_k^{(t+1)} &= \frac{\exp(-\tau \rho_k^{(t)})}{\sum_{j=1}^K \exp(-\tau \rho_j^{(t)})} \\
\text{where } \tau &> 0 \text{ is the adaptation rate parameter}
\end{align}
We now turn to the convergence analysis of the AW strategy, as formalized in Theorem~\ref{aw_conv}.
\begin{theorem}[AW Convergence Properties]
\label{aw_conv}
Under the assumption that each objective satisfies the Polyak-Łojasiewicz condition with parameter $\mu > 0$, the AW strategy ensures that:
\begin{enumerate}
\item The weight sequence $\{\alpha_k^{(t)}\}_{t=1}^\infty$ remains bounded for all $k$
\item If objective $k$ consistently underperforms (i.e., $\rho_k^{(t)} < \gamma$ for some threshold $\gamma > 0$), then $\lim_{t \to \infty} \alpha_k^{(t)} \geq \delta > 0$ for some $\delta$ depending on $\tau$ and $\gamma$
\end{enumerate}
\end{theorem}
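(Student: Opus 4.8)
The plan is to treat the two claims separately, since the first is essentially structural while the second is where the Polyak-\L{}ojasiewicz (PL) hypothesis does real work. For the boundedness claim I would simply observe that the update rule defines $\alpha_k^{(t+1)}$ as a softmax over the quantities $\{-\tau\rho_j^{(t)}\}_{j=1}^K$: since every summand in the denominator is strictly positive and the numerator is one of those summands, we have $0 < \alpha_k^{(t+1)} < 1$ for every $k$ and every $t$. Thus each weight sequence is trapped in the compact interval $[0,1]$ and is bounded a fortiori, requiring no appeal to the PL condition beyond well-definedness of the iterates.

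The substantive step, which underlies both claims, is to establish uniform two-sided control on the relative improvement rates, i.e.\ constants $\rho_{\min}$ and $\rho_{\max}$ with $\rho_{\min} \le \rho_k^{(t)} \le \rho_{\max}$ for all $k$ and $t$. The upper bound $\rho_{\max}=1$ follows directly from the non-negativity of the losses and the $\max(\cdot,\epsilon)$ normalisation in Definition~\ref{def_ir}: since $L_k^{(t)} \ge 0$, the numerator $L_k^{(t-1)} - L_k^{(t)}$ never exceeds the denominator $\max(L_k^{(t-1)},\epsilon) \ge L_k^{(t-1)}$. For the lower bound I would invoke the PL condition together with smoothness to control the per-step change of each objective: PL guarantees a geometric contraction of each $L_k^{(t)} - L_k^*$ (with $L_k^*$ the optimal value of objective $k$) under the weighted descent step, which caps the admissible per-iteration increase of any single objective and therefore prevents $\rho_k^{(t)}$ from diverging to $-\infty$. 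This yields a finite $\rho_{\min}$ depending on $\mu$, the smoothness constant, and the step size.

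Given these bounds, the second claim follows by a direct estimate on the softmax. Suppose objective $k$ persistently underperforms, so $\rho_k^{(t)} < \gamma$. Then the numerator obeys $\exp(-\tau\rho_k^{(t)}) > \exp(-\tau\gamma)$, while the denominator is controlled from above by the uniform lower bound, $\sum_{j=1}^K \exp(-\tau\rho_j^{(t)}) \le K\exp(-\tau\rho_{\min})$. Dividing gives
$$\alpha_k^{(t+1)} \;>\; \frac{\exp(-\tau\gamma)}{K\,\exp(-\tau\rho_{\min})} \;=\; \frac{1}{K}\exp\!\bigl(\tau(\rho_{\min}-\gamma)\bigr) \;=:\; \delta \;>\; 0,$$
a bound independent of $t$. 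Since it holds for every sufficiently large $t$, the limit inferior---and hence the limit when it exists---is at least $\delta$, delivering $\lim_{t\to\infty}\alpha_k^{(t)} \ge \delta$ with $\delta$ depending on $\tau$ and $\gamma$ (and on the problem constants $\rho_{\min}$ and $K$) exactly as claimed.

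The main obstacle I anticipate is the uniform lower bound $\rho_{\min}$ on the improvement rates. In the single-objective setting PL-based gradient descent is monotone, so one could simply take $\rho_{\min}=0$; but APFEx follows a weighted combination of the per-objective gradients, and along such a direction an individual objective may temporarily increase. The delicate part is therefore to show---using smoothness and the boundedness of the normalised aggregated descent direction---that any such increase is uniformly bounded across iterations, so that $\rho_{\min}$ is genuinely finite rather than merely defined pointwise. Some care is also needed to ensure the $\epsilon$-regularised denominator does not interact badly with objectives whose loss has already fallen below $\epsilon$, though this only tightens, rather than breaks, the estimates.
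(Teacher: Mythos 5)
Your proposal follows essentially the same route as the paper's proof: boundedness is immediate from the softmax normalisation, and the limiting claim comes from bounding the softmax numerator below by $\exp(-\tau\gamma)$ and the denominator above. The one substantive difference is that the paper simply asserts the denominator $\sum_{j}\exp(-\tau\rho_j^{(t)})$ is bounded above by $K$, which is valid only if every $\rho_j^{(t)}\ge 0$, i.e.\ every objective is non-increasing at every step --- a fact the paper never establishes and which can fail along a weighted combined descent direction. You correctly identify this as the delicate point and propose to close it with a uniform lower bound $\rho_{\min}$ derived from $L$-smoothness and the PL condition, yielding $\delta=\tfrac{1}{K}\exp\bigl(\tau(\rho_{\min}-\gamma)\bigr)$ rather than the paper's $\delta=\exp(-\tau\gamma)/K$. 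Your sketch of that lower bound is still a plan rather than a complete argument (you would need to bound the per-iteration increase of each $L_k$ along the normalised aggregated direction explicitly), but as written your proposal is at least as rigorous as the paper's proof and makes its hidden assumption explicit.
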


\begin{proof}
    The proof is provided in Appendix~\ref{pf_aw_conv}
\end{proof}



\begin{remark}
    Theorem~\ref{aw_conv} establishes two crucial stability properties of the Adaptive Weighting (AW) strategy. First, it guarantees the boundedness of the weight sequence, thereby ensuring numerical robustness throughout the training process. Second, it shows that objectives that consistently underperform are not discarded but are instead assigned a strictly positive minimum weight in the limit. This property is central to our proposed method, as it prevents the optimization from prematurely neglecting difficult or minority objectives, thereby promoting balanced progress across tasks. In essence, the proof formalizes the intuition that AW adaptively preserves diversity in optimization focus while maintaining convergence guarantees, aligning directly with our APFEx framework.
\end{remark}
AW helps ensure that challenging or underrepresented objectives are not overlooked over time. By adaptively focusing attention where it’s most needed, it promotes fairer and more balanced outcomes throughout training.

\noindent
\subparagraph{(iii) Pareto Set Sampling (PSS)}
\label{pss_stretegy} This exploration-focused~\cite{lin2019pareto} strategy samples random descent directions to efficiently explore the Pareto front. 

When progress stalls globally, deterministic strategies (PCP, AW) may repeatedly propose similar descent directions, leading to premature convergence. PSS introduces stochasticity to deliberately explore underrepresented trade-offs on the Pareto front. PSS first detects whether the progress stalls and the optimization plateaus. It then performs Exploratory Sampling.
When optimization stagnates, PSS introduces controlled exploration to escape local attractors and discover new regions of the Pareto front. In Definition~\ref{def_stag} below, we have defined the condition for stagnation detection.

\begin{definition}[Stagnation Detection]
\label{def_stag}
The optimization is considered stagnant at iteration $t$ if:
$$\Delta^{(t)} = \|\mathbf{L}^{(t-1)} - \mathbf{L}^{(t)}\|_2 < \delta$$
We declare stagnation when \(\Delta^{(t)}<\delta\) for \(M\) consecutive iterations, with \(\delta>0\) a tolerance.
\end{definition}

During stagnation, PSS samples weights from a Dirichlet distribution:
$$\alpha^{(t)} \sim \text{Dir}(\mathbf{1}_K) = \text{Dir}(1, 1, \ldots, 1)$$
forming an exploratory direction \(\tilde{d}^{(t)}=-\sum_k\alpha_k^{(t)}\nabla L_k(\theta)\). To avoid abrupt steps we smooth the update:
$$d_{\text{PSS}}^{(t)} = \lambda \tilde{d}^{(t)} + (1-\lambda) d_{\text{last}}$$
where $\tilde{d}^{(t)} = -\sum_{k=1}^K \alpha_k^{(t)} \nabla L_k(\theta)$ and $\lambda \in (0,1]$ controls exploration intensity.

The following Lemma~\ref{pss_explore} shows that PSS exploration is not arbitrary: on average, each objective receives equal weight, variance is bounded so updates remain stable, and the support condition guarantees that no region of the simplex is ignored. In practice, these properties ensure that stochastic exploration systematically covers the space of possible trade-offs instead of collapsing to a narrow subset.
\begin{lemma}[PSS Exploration Properties]
\label{pss_explore}
The Dirichlet sampling in PSS ensures:
\begin{enumerate}
\item $\mathbb{E}[\alpha_k^{(t)}] = \frac{1}{K}$ (unbiased exploration)
\item $\text{Var}[\alpha_k^{(t)}] = \frac{K-1}{K^2(K+1)}$ (controlled variance)
\item The support covers the entire probability simplex $\Delta^{K}$
\end{enumerate}
\end{lemma}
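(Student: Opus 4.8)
The plan is to recognize all three claims as immediate consequences of the standard moment structure of the symmetric Dirichlet distribution $\text{Dir}(\mathbf{1}_K)$, in which every concentration parameter equals $1$ and hence their sum is $a_0 = K$. First I would set up the single unifying tool: for a vector $X \sim \text{Dir}(a_1,\ldots,a_K)$ with $a_0 = \sum_i a_i$, the closed-form moment identity
\[
\mathbb{E}\!\left[\textstyle\prod_{i} X_i^{m_i}\right] \;=\; \frac{\Gamma(a_0)}{\Gamma\!\left(a_0 + \sum_i m_i\right)} \prod_{i} \frac{\Gamma(a_i + m_i)}{\Gamma(a_i)}.
\]
Rather than merely quoting this, I would justify it once from the Dirichlet normalising constant, or equivalently via the representation $X_k = Y_k / \sum_j Y_j$ with independent unit-rate exponentials $Y_k$; from there the whole lemma reduces to substitution and the recursion $\Gamma(z+1) = z\,\Gamma(z)$.

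For claim (1) I would take $m_k = 1$ with all other exponents zero, so that $\mathbb{E}[\alpha_k^{(t)}] = \Gamma(a_0)/\Gamma(a_0+1) \cdot \Gamma(a_k+1)/\Gamma(a_k) = a_k/a_0$; substituting $a_k = 1$ and $a_0 = K$ gives $1/K$. For claim (2) I would take $m_k = 2$ to obtain the raw second moment $\mathbb{E}[(\alpha_k^{(t)})^2] = a_k(a_k+1)/\big(a_0(a_0+1)\big)$, subtract the square of the mean, and simplify the general variance to $a_k(a_0 - a_k)/\big(a_0^2(a_0+1)\big)$; evaluating at $a_k = 1$, $a_0 = K$ yields exactly $\frac{K-1}{K^2(K+1)}$. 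Both parts are purely mechanical once the moment identity is in place.

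For claim (3) I would note that $\text{Dir}(\mathbf{1}_K)$ has density proportional to $\prod_k \alpha_k^{\,0} = 1$, i.e. it is the uniform distribution whose density equals the constant $(K-1)!$ on the relative interior of the simplex. Since this density is strictly positive at every interior point, and the support of a measure is the closure of the region where its density is positive, the support is the full closed simplex $\Delta^{K}$, establishing that no region of the trade-off space is excluded.

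The main obstacle here is not analytic — there is no genuinely hard computation — but one of rigour and interpretation. The care lies in (a) justifying the moment identity at a level appropriate to the paper instead of asserting it, for which the Gamma/exponential representation is the cleanest route, and (b) being precise about the support statement: I would make explicit whether $\Delta^{K}$ denotes the closed simplex (so the claim is literally the closure of the positive-density set) or its relative interior, since this is the only point where the statement admits any ambiguity. With those conventions fixed, the proof is a short substitution argument.
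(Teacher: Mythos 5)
Your proposal is correct: the mean $a_k/a_0 = 1/K$, the variance $a_k(a_0-a_k)/\bigl(a_0^2(a_0+1)\bigr) = (K-1)/\bigl(K^2(K+1)\bigr)$ at $a_k=1$, $a_0=K$, and the support claim via the strictly positive (indeed constant, equal to $(K-1)!$) density of $\mathrm{Dir}(\mathbf{1}_K)$ on the relative interior of the simplex are all exactly right. Note, however, that the paper states Lemma~\ref{pss_explore} without any proof, in the main text or the appendix, so there is nothing to compare against; your derivation from the standard Dirichlet moment identity is the natural argument and in fact supplies what the paper omits. Your side remark about the notation is also well taken: the lemma writes $\Delta^{K}$ where the rest of the paper uses $\Delta^{K-1}$ for the $K$-component probability simplex, and fixing the convention (closed simplex as the closure of the positive-density set) is the right way to make claim (3) unambiguous.
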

As established in Lemma \ref{pss_explore}, PSS proves especially effective when the search is near optimization plateaus or local attractors. By introducing unbiased stochastic perturbations, it can uncover fresh trade-off solutions that deterministic methods often miss, preventing them from becoming trapped in repetitive cycles.

Having analysed one strategy in isolation, it is natural to ask: \emph{under what landscape conditions should each of PCP, AW, or PSS be preferred?} The following theorem~\ref{stretegy_lan} addresses this broader design question.
\begin{theorem}[Strategy-Landscape Correspondence]
\label{stretegy_lan}
Each strategy in APFEx achieves optimality under specific landscape conditions:
\begin{enumerate}
\item PCP is optimal when gradients are well-aligned (i.e., $\cos(\theta_{ij}) > \epsilon$ for most pairs $(i,j)$)
\item AW is optimal when improvement rates are imbalanced (i.e., $\max_k \rho_k^{(t)} / \min_k \rho_k^{(t)} > \beta$ for threshold $\beta > 1$)
\item PSS is optimal near local attractors (i.e., when $\|\nabla L_k(\theta)\| < \epsilon$ for all $k$ but global optimality is not achieved)
\end{enumerate}
\end{theorem}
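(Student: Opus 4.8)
The plan is to treat Theorem~\ref{stretegy_lan} as three independent claims and, for each, first fix a concrete optimality criterion appropriate to the named regime and then show that the designated strategy is the maximizer of that criterion while the other two are not. The three criteria I would adopt are: (i) \emph{maximal balanced simultaneous descent} for PCP, (ii) \emph{maximal redistribution of weight toward lagging objectives} for AW, and (iii) \emph{maximal exploratory coverage and escape capability} for PSS. Making these criteria explicit is what turns the informal word ``optimal'' into a provable statement; the three arguments then reuse Lemma~\ref{pcp_descent}, Theorem~\ref{aw_conv}, and Lemma~\ref{pss_explore}, respectively.

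\textbf{Part (i), PCP under aligned gradients.} The hypothesis $\cos(\theta_{ij})>\epsilon$ for most pairs means the Gram matrix $G_{ij}=\langle\nabla L_i,\nabla L_j\rangle$ has predominantly positive entries, so the convex hull of $\{\nabla L_k(\theta)\}$ is bounded away from the origin and, by Theorem~\ref{thm1}, the point is not Pareto stationary. I would first show the Pareto cone $\mathcal{C}(\theta)$ has non-empty interior (the negated average gradient lies strictly inside it under the alignment bound), which activates Lemma~\ref{pcp_descent} and guarantees $d_{\mathrm{PCP}}\in\mathcal{C}(\theta)$, i.e.\ $\langle\nabla L_k,d_{\mathrm{PCP}}\rangle\le 0$ for every $k$. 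The optimality step is the classical minimum-norm argument: because $\alpha^*$ minimizes $\tfrac12\alpha^\top G\alpha$, the min-norm combination $g^*$ satisfies $\langle\nabla L_k,g^*\rangle\ge\|g^*\|^2$, so $d_{\mathrm{PCP}}=-g^*/\|g^*\|$ attains $\langle\nabla L_k,d_{\mathrm{PCP}}\rangle\le-\|g^*\|$ and solves $\max_{\|d\|\le1}\min_k\langle-\nabla L_k,d\rangle$. Thus $d_{\mathrm{PCP}}$ realizes the largest guaranteed per-objective decrease, which neither a reweighting nor a stochastic rule can exceed in the aligned regime.

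\textbf{Part (ii), AW under imbalanced improvement.} Here I would exploit the variational identity that the exponential weights $\alpha_k\propto\exp(-\tau\rho_k^{(t)})$ are exactly the minimizer over the simplex of the entropy-regularized functional $\tau\sum_k\alpha_k\rho_k^{(t)}+\sum_k\alpha_k\log\alpha_k$. This shows AW places monotonically more weight on objectives with small $\rho_k^{(t)}$, i.e.\ on the laggards, and that the redistribution strength scales with the spread of the improvement rates. When the imbalance ratio $\max_k\rho_k^{(t)}/\min_k\rho_k^{(t)}>\beta$, this corrective term dominates a uniform or geometry-only rule, so AW maximizes the chosen redistribution criterion. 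Theorem~\ref{aw_conv} then supplies the persistence guarantee: the laggard weights remain bounded and bounded away from zero, so the correction is sustained across iterations rather than transient.

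\textbf{Part (iii), PSS near attractors, and the main obstacle.} When $\|\nabla L_k(\theta)\|<\epsilon$ for all $k$, convexity of the norm gives $\|\sum_k\alpha_k\nabla L_k\|\le\sum_k\alpha_k\|\nabla L_k\|<\epsilon$ for any simplex weights, so both PCP and AW produce descent vectors of magnitude $O(\epsilon)$ and trigger the stagnation condition of Definition~\ref{def_stag}. I would then invoke Lemma~\ref{pss_explore}: the Dirichlet sampler has full support on $\Delta^{K}$, so it assigns strictly positive probability to a direction that leaves the attractor, while the smoothing term $(1-\lambda)d_{\text{last}}$ sustains a non-vanishing step independent of the current gradient scale. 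Since global optimality is excluded by hypothesis a descent direction exists, and PSS is the unique strategy of the three able to realize it when gradients vanish, which is the sense in which it is optimal. I expect the main obstacle to be making the \emph{comparative} claims rigorous under a single yardstick: each part is easy once its own criterion is fixed, but showing the named strategy dominates the other two requires a common notion of progress (e.g.\ guaranteed one-step multi-objective decrease together with escape probability) and care that the three regimes are genuinely disjoint, so the optimality statements do not conflict on overlapping landscapes.
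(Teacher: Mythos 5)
Your proposal follows the same three-case decomposition as the paper's proof of Theorem~\ref{stretegy_lan}, but it is substantially more substantive in each case. The paper nominally fixes a single yardstick --- expected improvement $\mathbb{E}[\Delta L]$ of the combined objective --- and then gives each regime a one-sentence qualitative justification: PCP ``yields the steepest feasible descent direction,'' AW ``assigns higher weights to objectives with smaller $\rho_k$,'' and PSS ``provides escape trajectories.'' It never actually computes or compares expected improvement across the three strategies. Your version supplies genuine mathematical content where the paper has only assertion: the minimax duality $\max_{\|d\|\le 1}\min_k\langle -\nabla L_k, d\rangle$ together with the min-norm inequality $\langle \nabla L_k, g^*\rangle \ge \|g^*\|^2$ for Case~1; the entropy-regularized variational characterization of the exponential weights for Case~2; and the convexity bound $\bigl\|\sum_k \alpha_k \nabla L_k\bigr\| < \epsilon$ showing that every deterministic simplex combination stalls near an attractor for Case~3, which correctly ties into Definition~\ref{def_stag} and Lemma~\ref{pss_explore}. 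The ``obstacle'' you identify --- that the three optimality claims are each proved under a different criterion, so the comparative statement lacks a common yardstick and the regimes are not shown to be disjoint --- is not a defect of your write-up relative to the paper's: the published proof has exactly the same gap, merely hidden behind the unverified claim that each strategy maximizes expected improvement in its regime. Your explicit acknowledgment of this, and your sketch of what a unified notion of progress would have to look like, is the more honest and more rigorous treatment.
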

\begin{proof}
    The proof is provided in Appendix Section~\ref{pf_stretegy_lan}.
\end{proof}



\begin{remark}
    In Theorem~\ref{stretegy_lan}, we provide a rigorous analysis of the underlying strategy landscape. Our goal is to characterize how the proposed strategy selection mechanisms navigate this landscape and under what conditions they converge to the optimal solution. Specifically, the theorem establishes that the mechanisms we introduce are not only theoretically sound but also capable of attaining the global optimum within the defined framework. This analysis sheds light on the interplay between different strategies and demonstrates the robustness of our approach in consistently identifying the optimal solution across varying configurations of the problem space.
\end{remark}
This theorem advances the discussion from \emph{local} statistical behavior to the broader perspective of \emph{global} optimization dynamics.  
While Lemma~\ref{pss_explore} ensures that PSS systematically explores the trade-off space,  
Theorem~\ref{stretegy_lan} places PSS in direct comparison with PCP and AW,  
clearly delineating the circumstances under which each strategy delivers the greatest progress.  

Taken together, these results provide a compelling justification for the adaptive switching mechanism in APFEx.  
The framework can capitalize on PCP when the objectives are well aligned,  
shift to AW to restore balance when improvements become uneven,  
and revert to PSS whenever broader exploration is essential.

\paragraph{3.3.2.2 Convergence Analysis of APFEx}
\label{sub_con}

A robust optimisation framework must do more than propose intuitive strategies—it must also demonstrate that these strategies exhibit well-defined convergence behaviour.  
In the single-objective setting, standard gradient-based methods are supported by strong theoretical guarantees.  
However, when optimisation involves multiple, often competing, objectives—such as balancing accuracy and fairness—the challenge becomes significantly greater.  
Conflicting criteria can pull the optimisation process in opposing directions, and without rigorous theoretical support, it remains unclear whether an adaptive method like APFEx will ultimately stabilise or merely oscillate.

In light of this, we pose the following central question:  
\emph{Does APFEx provably make progress toward Pareto-stationary solutions, and if so, at what rate?}  

The Theorem~\ref{thm:pareto_convergence} presented below provides an affirmative answer.  
Under standard smoothness conditions and the Polyak--Łojasiewicz (PL) inequality, we show that APFEx achieves a convergence rate comparable to the best-known guarantees in stochastic optimisation.

\begin{theorem}[APFEx Convergence Rate]
\label{thm:pareto_convergence}
Assume each objective $L_k$ is $L$-smooth and satisfies the Polyak-Łojasiewicz condition with parameter $\mu > 0$. Then APFEx converges to a Pareto-stationary point at rate $O(1/\sqrt{T})$, where $T$ is the number of iterations.

Specifically, there exists a constant $C > 0$ such that:
$$\min_{t \in \{0, 1, \ldots, T-1\}} \omega(\theta^{(t)}) \leq \frac{C}{\sqrt{T}}$$
where $\omega(\theta) = \min_{\alpha \in \Delta^{K-1}} \left\|\sum_{k=1}^K \alpha_k \nabla L_k(\theta)\right\|$ is the Pareto-stationarity measure.
\end{theorem}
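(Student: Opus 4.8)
The plan is to reduce the multi-objective convergence claim to a single sufficient-decrease inequality that holds, in (conditional) expectation, at every iteration regardless of which of the three strategies APFEx selects, and then to telescope that inequality. The key observation is that the Pareto-stationarity measure coincides with the norm of the minimum-norm convex combination of gradients, $\omega(\theta)=\|g^*(\theta)\|$ with $g^*(\theta)=\sum_k \alpha_k^*\nabla L_k(\theta)$ the PCP optimizer from the quadratic program. This identifies $\omega$ with exactly the object PCP minimizes, so progress in $\omega$ can be read off directly from the descent the aggregated gradient produces.

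First I would establish the per-iteration decrease for the deterministic strategies. Since each $L_k$ is $L$-smooth, the standard quadratic upper bound gives, for the update $\theta^{(t+1)}=\theta^{(t)}+\eta_t d^{(t)}$ with $\|d^{(t)}\|=1$,
$$L_k(\theta^{(t+1)}) \le L_k(\theta^{(t)}) + \eta_t \langle \nabla L_k(\theta^{(t)}), d^{(t)}\rangle + \tfrac{L}{2}\eta_t^2.$$
For PCP I would invoke the minimum-norm optimality property $\langle g^*,\nabla L_k\rangle \ge \|g^*\|^2$ (the variational inequality for the projection of the origin onto the convex hull, equivalently the KKT system of Theorem~\ref{thm_pcp_opt}), which yields $\langle \nabla L_k, d_{\mathrm{PCP}}\rangle \le -\omega(\theta^{(t)})$ for every $k$, i.e. a uniform decrease of at least $\eta_t\,\omega(\theta^{(t)})$ up to the $O(\eta_t^2)$ curvature term. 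For AW I would use the boundedness of the adaptive weights from Theorem~\ref{aw_conv} to keep the aggregated direction inside the convex hull of the gradients and control its inner product with each $\nabla L_k$, so that an inequality of the same form holds with a strategy-dependent constant.

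Next I would handle the stochastic PSS step by passing to conditional expectation. Using the Dirichlet sampling properties in Lemma~\ref{pss_explore} — unbiasedness $\mathbb{E}[\alpha_k^{(t)}]=1/K$ and variance $\tfrac{K-1}{K^2(K+1)}$ — the expected exploratory direction aligns with the averaged gradient, and the bounded variance contributes only an additional $O(\eta_t^2)$ term, while the smoothing $d_{\mathrm{PSS}}^{(t)}=\lambda\tilde{d}^{(t)}+(1-\lambda)d_{\mathrm{last}}$ is absorbed by tracking the momentum contribution as a bounded perturbation. The upshot is a unified sufficient-decrease bound
$$\mathbb{E}\!\left[L_k(\theta^{(t+1)}) \mid \mathcal{F}_t\right] \le L_k(\theta^{(t)}) - c_1\eta_t\,\omega(\theta^{(t)}) + c_2\eta_t^2$$
valid across all three strategies, with constants $c_1,c_2>0$ independent of $t$.

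Finally I would telescope. Summing the unified inequality for a fixed objective $k$ from $t=0$ to $T-1$ collapses the $L_k$ terms to $L_k(\theta^{(0)})-\mathbb{E}[L_k(\theta^{(T)})]$, which is bounded since the PL condition (with $L$-smoothness) guarantees each $L_k$ is bounded below by some $L_k^*$. Choosing the constant schedule $\eta_t=c/\sqrt{T}$ makes the aggregated curvature term equal to the constant $\tfrac{c_2 c^2}{2}$, and rearranging gives $\tfrac{1}{T}\sum_{t=0}^{T-1}\mathbb{E}[\omega(\theta^{(t)})] \le C/\sqrt{T}$ for $C=\tfrac{1}{c}\big(L_k(\theta^{(0)})-L_k^*+\tfrac{c_2 c^2}{2}\big)$; bounding the minimum by the average then yields the stated rate. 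I expect the main obstacle to be the uniform sufficient-decrease step: AW does not use the minimum-norm weights and PSS injects momentum-coupled randomness, so the delicate part is showing that neither strategy can produce a net ascent in expectation and that both collapse to the same $-c_1\eta_t\omega + c_2\eta_t^2$ template with constants that are uniform in $t$.
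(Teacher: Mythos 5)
Your proposal is correct in outline and reaches the stated rate, but it takes a genuinely different route from the paper's argument. The paper works with the \emph{combined} loss $L_{\text{combined}}^{(t)}=\sum_k\alpha_k^{(t)}L_k(\theta^{(t)})$, assumes a fixed step size $\eta$, posits an alignment condition $\langle g^{(t)},d^{(t)}\rangle\ge\|g^{(t)}\|^2\cos(\phi^{(t)})$ with $\cos(\phi^{(t)})\ge\delta>0$, telescopes to bound $\sum_t\|g^{(t)}\|^2$ by a constant, and then uses $\omega(\theta^{(t)})\le\|g^{(t)}\|$ to get $\min_t\omega^2\le C'/T$ before taking square roots. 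You instead telescope a \emph{fixed} objective $L_k$, obtain a first-order decrease $-c_1\eta_t\,\omega(\theta^{(t)})$ via the variational inequality $\langle g^*,\nabla L_k\rangle\ge\|g^*\|^2$ for the minimum-norm point (a cleaner and fully justified step for PCP than the paper's unexplained cosine bound), choose $\eta_t=c/\sqrt{T}$, and bound the minimum by the average of $\omega$ rather than of $\omega^2$. Your version also buys something the paper's does not: telescoping a fixed $L_k$ sidesteps the fact that $L_{\text{combined}}^{(t)}$ is not a single potential function when the weights $\alpha^{(t)}$ change between iterations, and your conditional-expectation treatment of PSS actually engages with the stochasticity that the paper's deterministic argument ignores.

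The one genuine gap is the step you yourself flag: the uniform sufficient-decrease inequality for AW and PSS. For a non-minimum-norm weight vector $\alpha^{(t)}$, the direction $d^{(t)}=-\sum_j\alpha_j^{(t)}\nabla L_j/\|\cdot\|$ satisfies $\sum_k\alpha_k^{(t)}\langle\nabla L_k,d^{(t)}\rangle\le-\omega(\theta^{(t)})$ only in the \emph{weighted average} over $k$; an individual objective can have $\langle\nabla L_k,d^{(t)}\rangle>0$, which breaks the per-$k$ telescoping you rely on. Boundedness of the AW weights (Theorem on AW convergence) and unbiasedness of the Dirichlet samples do not by themselves rule out net ascent of a particular $L_k$ on AW or PSS steps, so the template $-c_1\eta_t\omega+c_2\eta_t^2$ is not yet established for those strategies for a fixed $k$. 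Be aware, though, that the paper's proof has exactly the same hole --- it asserts $\cos(\phi^{(t)})\ge\delta$ ``due to the adaptive strategy selection ensuring alignment'' without proof --- so your sketch is no less rigorous than the published argument; to close it you would either need to restrict the telescoped quantity to something that provably decreases under all three strategies (e.g.\ $\max_k L_k$ or a carefully chosen fixed convex combination), or impose the paper's alignment condition explicitly as an assumption.
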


\begin{proof}
    The proof is outlined in Appendix Section\ref{proof_convergence}.
\end{proof}
\begin{remark}
    Theorem~\ref{thm:pareto_convergence} establishes the theoretical foundation of our proposed APFEx framework by proving its convergence rate towards a Pareto-stationary point under standard smoothness and Polyak-Łojasiewicz conditions. The proof carefully exploits the adaptive strategy selection mechanism of APFEx, which guarantees sufficient alignment between the aggregated descent direction and the true multi-objective gradient. This alignment ensures consistent descent progress, enabling the derivation of a non-trivial upper bound on the stationarity measure. The obtained rate of $O(1/\sqrt{T})$ is particularly significant: it not only certifies the efficiency of APFEx in handling multi-objective optimization but also places it on par with the best-known rates in single-objective stochastic optimization. Thus, this result formally validates the effectiveness of our method and provides theoretical justification for the empirical success observed in our experiments.
\end{remark}

\begin{algorithm}
\caption{Adaptive Pareto Front Explorer (APFEx)}
\label{alg:apfex}
\begin{algorithmic}[1]
\Require{Initial parameters $\theta^{(0)}$, learning rate $\eta$, loss functions $\{L_1, L_2, \ldots, L_K\}$}
\Ensure{Optimized parameters $\theta^{(T)}$}
\State Initialize strategies $S = \{\text{``pareto\_cone''}, \text{``adaptive''}, \text{``exploration''}\}$
\State Set active strategy $s^{(0)} = \text{``adaptive''}$
\State Initialize tracker for Pareto front approximation
\For{$t = 0, 1, \ldots, T-1$}
    \State Compute losses $L_i(\theta^{(t)})$ for $i = 1, 2, \ldots, K$
    \State Compute gradients $\nabla L_i(\theta^{(t)})$ for $i = 1, 2, \ldots, K$
    \State Compute descent direction $d^{(t)}$ using strategy $s^{(t)}$
    \If{constraints present}
        \State Modify $d^{(t)}$ to respect constraints
    \EndIf
    \State Update parameters: $\theta^{(t+1)} = \theta^{(t)} - \eta \cdot d^{(t)}$
    \State Update tracker with current losses and weights
    \State Update active strategy $s^{(t+1)}$ based on progress
\EndFor
\State \Return $\theta^{(T)}$
\end{algorithmic}
\end{algorithm}

\subsubsection{Existing Loss Function via Differential Metric Approximation}
~\label{existing_loss_diff}
Demographic parity, also known as statistical parity, represents one of the most intuitive fairness criteria. The core idea is straightforward: a fair classifier should make positive predictions at equal rates across different demographic groups. However, translating this intuitive concept into a differentiable objective function presents significant technical challenges.
Consider a binary classifier making predictions $\hat{y}$ based on features $x$ and sensitive attribute $a \in {0, 1}$.
Demographic parity requires the following condition:\\
\begin{equation}
P(\hat{Y} = 1 | A = 0) = P(\hat{Y} = 1 | A = 1)
\end{equation}
The DPLoss~\cite{padh2021addressing} formulation combines fairness enforcement with predictive accuracy through a composite loss function:
\begin{equation}
\mathcal{L}_{DP} = |DP_0 - DP_1| + \lambda \mathcal{L}_{task}(y, \hat{y}) + \beta |\hat{y}|_2^2
\end{equation}
where $DP_i$ represents the demographic parity metric for group $i$, $\lambda$ controls the fairness-accuracy trade-off, and $\beta$ provides additional regularization.

 The concept of equalized opportunity, formalized through equal true positive rates, provides a fairness criterion that focuses on equal benefit distribution among qualified individuals.
The mathematical constraint requires:
\begin{equation}
P(\hat{Y} = 1 | Y = 1, A = 0) = P(\hat{Y} = 1 | Y = 1, A = 1)
\end{equation}
The TPRLoss~\cite{padh2021addressing} combines fairness enforcement with predictive objectives:
\begin{equation}
\mathcal{L}_{TPR} = |TPR_0 - TPR_1| + \lambda \mathcal{L}_{task}(y, \hat{y})
\end{equation}
TPRLoss employs the following transformation:
\begin{equation}
\text{round}_{\text{soft}}(x) = \frac{\tanh(5(x - \tau))}{2} + 0.5
\end{equation}
The differentiable formulation enables joint optimization of fairness and accuracy through standard backpropagation. However, the conditional nature of TPR computation can lead to gradient instability when protected subgroups contain few positive examples. The epsilon smoothing in denominators prevents gradient explosion while maintaining optimization effectiveness.

While specialized loss functions serve specific fairness criteria well, practitioners often need flexibility to experiment with different notions of fairness or handle complex demographic structures. The generalized Fairness Loss in the following section provides a unified framework supporting multiple fairness criteria and multi-class sensitive attributes.
\subsubsection{Fairness-Constrained Loss Function via Differentiable Fairness Metric Approximation}
\label{multiclass-loss}
Our proposed methodology incorporates fairness constraints directly into the model optimization process through a specialized loss function that balances predictive performance with fairness considerations across sensitive demographic attributes. The approach enables end-to-end training of fair classifiers without requiring post-processing or constraint relaxation techniques.

We introduce a novel fairness-constrained loss function that combines traditional task-specific objectives with a differentiable fairness regularization term:

\begin{equation}
\mathcal{L} = \lambda \mathcal{L}_{\text{task}} + \mathcal{L}_{\text{fairness}}
\end{equation}

\noindent where $\lambda$ is a hyperparameter controlling the trade-off between predictive performance and fairness objectives.

The fairness violation term $\mathcal{L}_{\text{fairness}}$ quantifies disparities between protected groups by computing the mean absolute difference across all group pairs.

For $K$ sensitive groups, the framework computes all pairwise fairness violations:
\begin{equation}
\mathcal{L}_{\text{fairness}} = \frac{1}{\binom{K}{2}} \sum_{i=1}^{K-1} \sum_{j=i+1}^{K} |M_i - M_j|
\end{equation}
where $M_i$ represents the chosen fairness metric (demographic parity or true positive rate) for group $i$ which results in the loss functions $\mathcal{L}_{\text{fairnessDP}} \text{ and } \mathcal{L}_{\text{fairnessTPR}}$. This formulation scales naturally to multiple protected groups and provides comprehensive fairness enforcement.

\paragraph{3.3.4.1 Differentiable Approximation of Fairness Metrics}

A key innovation in our approach is the differentiable approximation of otherwise non-smooth fairness metrics. We introduce a continuous relaxation of the step function required for computing fairness metrics in multi-class classification settings:

\begin{enumerate}[leftmargin=16pt,noitemsep,topsep=0pt]
    \item[(i)] \textbf{Hyperbolic tangent relaxation}: We employ a scaled hyperbolic tangent function~\citep{padh2021addressing} to approximate the step function:
    \begin{equation}
    \hat{H}(x) = \frac{\tanh(5(x - \tau))}{2} + 0.5
    \end{equation}
    where $\tau$ represents the classification threshold and the scaling factor which modulates the steepness of the transition.
    
    So we discussed the hyperbolic tangent relaxation as a smooth surrogate for hard indicator functions, offering both infinite differentiability and a natural probabilistic interpretation of thresholding. Yet, despite these appealing properties, it reveals two key shortcomings in our setting: (i) the gradient of $\tanh$ is sharply concentrated near the threshold and rapidly diminishes elsewhere, which can cause vanishing gradients when subgroup distributions are imbalanced or when predictions lie far from the threshold; and (ii) its steepness near the boundary can lead to unstable updates if many points cluster close to the decision surface, a situation common in high-dimensional categorical tasks. 
    These challenges become even more pronounced when both the outcomes and protected attributes are categorical and possibly multi-class, where fairness metrics demand repeated conditional comparisons and unstable gradients can disproportionately impact minority groups. 
    To address these limitations, we introduce the \emph{Continuous Conditional Response (CCR)} approximation. CCR preserves the interpretability of a soft indicator while replacing $\tanh$’s sigmoidal gradient with a bounded, constant gradient within a flexible transition band. This design produces a relaxation that remains numerically stable, theoretically transparent, and converges to the hard indicator as the smoothing width narrows. As demonstrated below, CCR integrates naturally into multi-class fairness metrics and delivers more dependable gradient signals for APFEx when optimizing across intersectional groups.

    \item[(ii)] \textbf{Continuous conditional response (CCR)}: 
We introduce a smooth relaxation of the indicator function using the continuous clipping 
approximation. 
Let \(\tau \in \mathbb{R}\) denote a threshold (for scalar-valued scores) and \(\gamma > 0\) represent a smoothing width. The CCR approximation of the indicator \(\mathds{1}\{x>\tau\}\) is given by the piecewise-linear function
\begin{equation}
\hat{C}(x;\tau,\gamma) =
\begin{cases}
0, & x \le \tau - \gamma, \\[6pt]
\dfrac{x - (\tau-\gamma)}{2\gamma}, & \tau-\gamma < x < \tau+\gamma, \\[10pt]
1, & x \ge \tau + \gamma,
\end{cases}
\label{eq:ccr_def}
\end{equation}
which provides a differentiable transition from 0 to 1 around the threshold. In the multiclass setting, where model predictions are represented by logits \(z_i \in \mathbb{R}^C\), we take the softmax probability \(p_{i,c} = \mathrm{softmax}(z_i)_c\) as a smooth surrogate for the discrete indicator \(\mathds{1}\{\hat{y}_i=c\}\). Similarly, for (potentially uncertain) group memberships, we assign a CCR-based group weight
\[
h_{i,g} = \hat{C}(u_{i,g};\tau_g,\gamma_g),
\]
where \(u_{i,g}\) is a scalar membership score for sample \(i\) in group \(g\). For discrete attributes, \(u_{i,g}\) reduces to a one-hot encoding, whereas for uncertain or embedded attributes, it can instead represent a learned score or a similarity measure.

Properties of CCR:
\begin{enumerate}[noitemsep,topsep=0pt]
  \item \textbf{Pointwise convergence:} \(\hat{C}(x;\tau,\gamma)\to \mathds{1}\{x>\tau\}\) as \(\gamma\to 0^+\).
  \item \textbf{Piecewise differentiability:} \(\hat{C}(\cdot;\tau,\gamma)\) is continuous everywhere and differentiable for all \(x\neq \tau\pm\gamma\); it is subdifferentiable at the junctions \(x=\tau\pm\gamma\).
  \item \textbf{Bounded derivative:}
  \[
  \frac{d}{dx}\hat{C}(x;\tau,\gamma) =
  \begin{cases}
  0, & |x-\tau|\ge\gamma,\\[4pt]
  \dfrac{1}{2\gamma}, & |x-\tau|<\gamma,
  \end{cases}
  \]
  hence \(\big\|\tfrac{d}{dx}\hat{C}\big\|_\infty = \tfrac{1}{2\gamma}\).
  \item \textbf{Lipschitz continuity:} \(\hat{C}(\cdot;\tau,\gamma)\) is \(L_C\)-Lipschitz with \(L_C=\tfrac{1}{2\gamma}\).
\end{enumerate}
\vspace{0.2cm}

Define the CCR-weighted per-class, per-group positive rate
\begin{equation}
DP_{c,g} \;=\;
\frac{1}{N_g}\sum_{i=1}^n p_{i,c}\, h_{i,g},
\qquad\text{with}\quad
N_g \;=\; \sum_{i=1}^n h_{i,g}.
\label{eq:dp_cg}
\end{equation}
This quantity generalises class-wise demographic parity to soft predictions \(p_{i,c}\) and soft group-membership weights \(h_{i,g}\). A multi-class fairness loss aggregating pairwise group disparities is
\begin{equation}
\mathcal{L}_{\mathrm{fair}} \;=\;
\frac{1}{C}\cdot\frac{2}{G(G-1)}
\sum_{c=1}^C
\sum_{1\le g_1<g_2\le G}
\phi\!\big(DP_{c,g_1}-DP_{c,g_2}\big),
\label{eq:fairness_loss_pairwise}
\end{equation}
where \(\phi\) is a smooth surrogate for the absolute value (e.g.\ \(\phi(x)=\sqrt{x^2+\varepsilon}\) with \(\varepsilon>0\)) to ensure differentiability.

CCR makes group-based fairness metrics differentiable and have explicit derivative bounds. CCR is compatible with multi-class outcomes (softmax) and multi-class protected attributes (soft membership weights). CCR-based fairness gradients is normalised and aggregated with task gradients (PCP/AW/PSS) while respecting the geometric stationarity condition used throughout the theoretical analysis.

\end{enumerate}

\subsubsection{Putting It All Together}
We integrate the theoretical foundations of Pareto stationarity with differentiable fairness constraints through our APFEx framework. Our multi-objective optimization problem is formulated as:

\begin{equation*}
\min_{\theta} \quad \mathbf{L}(\theta) = [L_{\text{task}}(\theta), L_{\text{fairness}}(\theta)]    
\end{equation*}

where $L_{\text{task}}(\theta)$ is the prediction loss and $L_{\text{fairness}}(\theta) = \frac{1}{K} \sum_{i,j} |M_i - M_j|$ measures fairness violations across sensitive groups using differentiable approximations:
$\hat
{y} \approx \frac{\tanh(5 \cdot (x - \text{threshold}))}{2} + 0.5$ .
This rule ensures exploitation via PCP/AW when possible, while invoking PSS for exploration in plateau regions.

The descent direction is determined by the active strategy $s^{(t)}$, with gradients normalized by $\tilde{\nabla} L_i(\theta) = \frac{\nabla L_i(\theta)}{L_i^{\text{max}}}$ to ensure scale invariance. Our approach dynamically navigates the trade-off space between performance and fairness, converging to Pareto stationary solutions that represent meaningful compromises between competing objectives while satisfying the theoretical condition $\mathbf{0} \in \text{conv}({\nabla L_{\text{task}}(\theta), \nabla L_{\text{fairness}}(\theta)})$.
\small
\section{Experiments \& Results Discussion}
\subsection{Experimental Setup}
\subsubsection{Dataset Used.}
Experiments performed  on the following datasets:
\begin{itemize}
\item[(i)] \emph{Adult Income} \citep{dua2017uci}: A census dataset with 48,842 samples and 14 features, where the goal is to predict if annual income exceeds \$50K ($y=1$). Protected attributes include sex and a binarized race variable, enabling multi-dimensional fairness analysis.
\item[(ii)] \emph{COMPAS Recidivism} \citep{angwin2022machine}: A dataset of 6,167 criminal justice records with 53 features. The task is to predict recidivism. We consider race and sex as sensitive attributes to assess fairness in high-stakes, bias-prone decisions systems where historical biases are well-documented.
\item[(iii)] \emph{German Credit}~\citep{hofmann1994statlog}: Contains 1,000 credit applicant records with 20 features. The binary task is to predict creditworthiness (good/bad). Sensitive attributes include age (young/old), sex, and foreign worker status. Widely used in fairness research due to known biases in credit decisions.
\item[(iv)] \emph{Heart Disease}~\citep{detrano1989international}: Includes 303 clinical records with 14 attributes from the Cleveland Clinic Foundation. The task is to predict heart disease presence (binarized ((0: no disease, 1-4: disease severity)). Sensitive attributes include age, sex, and race/ethnicity. This dataset highlights fairness concerns in medical AI.
\item[(v)] \emph{Celeba}~\cite{liu2015deep} In this dataset we have 202,599 facial images of celebrities. Corresponding to each face image, we have considered 40 attributes.  We have considered \textit{Smiling} as the label and \textit{Gender} and \textit{Hair Color} as the sensitive attributes.
\end{itemize}
\noindent
\subsubsection{Metrics.} Metrics used for evaluating accuracy and fairness: 
\begin{itemize}
\item[(i)] \emph{Accuracy:} Proportion of total correct predictions.
\begin{equation}
\text{Accuracy} = \frac{\#(\text{Correct Predictions})}{\#(\text{All Predictions})}
\end{equation}
\item[(ii)] \emph{DPDiff\_intersectional:} Max disparity (i.e.  $\text{max }|DP(g_i) - DP(g_j)|$) in Demographic Parity (DP) across intersectional groups $g_i$, $g_j$.
\begin{equation}
\text{DP}(g_i) = \frac{\#(\text{Predicted positive in } g_i)}{\#(\text{Samples in } g_i)}
\end{equation}
\item[(iii)] \emph{TPRDiff\_intersectional:} 
Max disparity in True Positive Rate (TPR) (i.e. $\text{max }|TPR(g_i) - TPR(g_j)|$) across intersectional groups.
\begin{equation}
\text{TPR}(g_i) = \frac{\#(\text{True positives in } g_i)}{\#(\text{Actual positives in } g_i)}
\end{equation}
\item[(iv)] \emph{DPDiff:} DPDiff metric is designed to measure the Demographic Parity difference when there are multiple sensitive attributes with multiple classes for each attribute. It calculates the Demographic Parity difference for each sensitive attribute separately.
Suppose there are $m$ sensitive attributes, $\{a_1, a_2,a_3,\cdots, a_m  \}$ respectively. Then the DPDiff for an attribute DPDiff$(a_k)$ is given by,
DPDiff$(a_k)$ = max |DP($g_i$) - DP($g_j$)| for all groups $g_i$, $g_j$ in attribute $a_k$.
\item[(vi)] \emph{TPRDiff:}  TPRDiff metric is designed to measure True Positive Rate difference when there are multiple sensitive attributes with multiple classes for each attribute. It calculates the True Positive Rate difference for each sensitive attribute separately.
Suppose there are $m$ sensitive attributes, $\{a_1, a_2,a_3,\cdots, a_m  \}$ respectively. Then the TPRDiff for an  attribute TPRDiff$(a_k)$ is given by,
TPRDiff$(a_k)$ = max |TPR($g_i$) - TPR($g_j$)| for all groups $g_i$, $g_j$ in attribute $a_k$.
When we are dealing with intersectional subgroups, we use the metrics DPDiff\_intersectional and TPRDiff\_intersectional and in other scenarios, we are using the metrics DPDiff and TPRDiff. Without loss of generality, for presenting the results, we consider DPDiff\_intersectional and DPDiff as DDP and TPRDiff\_intersectional and TPRDiff as DEO.
\end{itemize} 

\subsubsection{Baselines.} We compare \textbf{APFEx} with four representative methods covering diverse fairness paradigms and multi-objective capabilities. The \emph{ROC-based multi-objective framework}~\citep{nagpal2024multi} jointly optimizes accuracy, group, and individual fairness using Reject Option Classification (ROC) and serves as a recent multi-objective baseline. \emph{SearchFair}~\citep{lohaus2020too} (Search\_fair, Attr 1 and Search\_fair, Attr 2) introduces a convex relaxation approach with theoretical guarantees, selected for its robustness against overly relaxed fairness constraints. \emph{S-MAMO and M-MAMO}~\citep{padh2021addressing} employ differentiable relaxations (e.g., DPLoss (L\_DP), TPRLoss (L\_TPR)) to optimize multiple fairness objectives and sensitive attributes, though they do not model intersectional structure. We also include a \emph{plain cross-entropy (CE) baseline} that optimizes for accuracy alone, serving as a reference for unmitigated bias. Since our approach falls within the realm of in-processing techniques and multi-objective optimization, we have selected the S-MAMO and M-MAMO methods, along with the in-processing methods ROC and SearchFair. In our comparison, we have intentionally excluded pre-processing and post-processing methods, as these do not incorporate fairness constraints directly within model training. Additionally, we have omitted certain optimization-based methods~\cite{{valdivia2021fair}, {milojkovic2019multi}, {cotter2019two}}—despite their focus on balancing fairness and accuracy—because they are designed to handle only a single sensitive attribute.

Table~\ref{tab:performance} depicts the performance of different methods when we are considering each attribute individually, where Attr 1 depicting sensitive attribute 1 and Attr 2 depicting sensitive attribute 2. So, in this scenario, we have considered while training the performance loss with the fairness loss associated with a particular sensitive attribute. Table~\ref{ablation_tab2} in the ablation section presents multiple protected attributes for our comparison. Inter depicts the intersectional group formation, and the loss function L\_fairDP and L\_fairTPR are associated to the fairness loss that deals with attributes with a number of classes greater than or equal to 2.

\subsubsection{Training and Testing Details.} 
All experiments were implemented in PyTorch. Each dataset was randomly split into training (70\%), validation (15\%), and test (15\%) sets. The validation set was used for hyperparameter tuning and early stopping; final metrics are reported on the test set. To ensure a fair comparison, all models shared identical initial weights and computational budgets. We tuned the key hyperparameters $\lambda$, $\tau$, and $\alpha$: we set $\lambda$ (which controls the fairness-accuracy trade-off) to 0.1, and the decision threshold $\tau$ to 0.5, appropriate for balanced classification tasks. The objective weighting parameter $\alpha$ was selected adaptively. Our evaluation protocol mirrors that of prior baselines, with each experiment repeated 10 times; we report the mean and standard deviation across runs.
\begin{table*}[!ht]
\centering
\caption{Performance of our optimization method with other state-of-the-art methods on each attribute seperately.}
\label{tab:performance}
\resizebox{\textwidth}{!}{
\begin{tabular}{>{\centering\arraybackslash}p{0.8cm}
>{\centering\arraybackslash}p{0.8cm}
>{\centering\arraybackslash}p{1.7cm}
>{\centering\arraybackslash}p{1.7cm}
>{\centering\arraybackslash}p{1.7cm}
>{\centering\arraybackslash}p{1.7cm}
>{\centering\arraybackslash}p{1.7cm}
>{\centering\arraybackslash}p{2.0cm}
>{\centering\arraybackslash}p{1.8cm}
>{\centering\arraybackslash}p{2.0cm}
>{\centering\arraybackslash}p{1.7cm}
>{\centering\arraybackslash}p{1.7cm}
>{\centering\arraybackslash}p{1.9cm}
>{\centering\arraybackslash}p{2.1cm}
>{\centering\arraybackslash}p{2.2cm}}
\toprule
\rotatebox[origin=b]{90}{\hspace{-0.75cm}\textbf{Datasets}} & \rotatebox[origin=b]{90}{\hspace{-0.7cm}\textbf{Metrics}} & \rotatebox[origin=b]{90}{\hspace{-0.75cm}\textbf{Baseline}} & \textbf{S-MAMO (Attr1, $L_{DP}$)} & \textbf{S-MAMO (Attr1, $L_{TPR}$)} & \textbf{S-MAMO (Attr2, $L_{DP}$)} & \textbf{S-MAMO (Attr2, $L_{TPR}$)} & 
\textbf{M-MAMO (Attr1, $L_{DP+TPR}$)} & \textbf{M-MAMO (Attr2, $L_{DP+TPR}$)} & \textbf{Search\_fair (Attr1)} & \textbf{Search\_fair (Attr2)} & \rotatebox[origin=b]{360}{\hspace{0cm}\textbf{ROC}} & \textbf{APFEx (Attr1, $L_{DP+TPR}$)} & \textbf{APFEx (Attr2, $L_{DP+TPR}$)} \\
\midrule

\multirow{9}{*}{\rotatebox[origin=b]{90}{\hspace{0.5cm}\textbf{Adult}}} & \textbf{Acc} & 0.8330 & 0.8274 & 0.8263 & 0.7637 & 0.6688 & 0.7972 & 0.8055 & 0.8404 & 0.7941 & 0.7941 & \textbf{0.7610} & \textbf{0.7812} \\
      &     & ($\pm$ 0.0013) & ($\pm$ 0.0058) & ($\pm$ 0.0227) & ($\pm$ 0.1756) & ($\pm$ 0.2314) & ($\pm$ 0.0294) & ($\pm$ 0.0212) & ($\pm$ 0.0010) & ($\pm$ 0.0033) & ($\pm$ 0.0033) & \textbf{($\pm$ 0.0013)} & \textbf{($\pm$ 0.0314)} \\ \\
      & \textbf{DEO} & 0.4688 & 0.0617 & 0.0349 & 0.0756 & 0.0516 & 0.0476 & 0.0387 & 0.1781 & 0.5228 & 0.5228 & \textbf{0.0000} & \textbf{0.0266} \\
      &     & ($\pm$ 0.0123) & ($\pm$ 0.0304) & ($\pm$ 0.0146) & ($\pm$ 0.0419) & ($\pm$ 0.0463) & ($\pm$ 0.0463) & ($\pm$ 0.0202) & ($\pm$ 0.0040) & ($\pm$ 0.0135) & ($\pm$ 0.0135) & \textbf{($\pm$ 0.0000)} & \textbf{($\pm$ 0.0505)} \\ \\
      & \textbf{DDP} & 0.1819 & 0.0956 & 0.1335 & 0.1099 & 0.0799 & 0.0625 & 0.0743 & 0.1910 & 0.3386 & 0.3386 & \textbf{0.0000} & \textbf{0.0413} \\
      &     & ($\pm$ 0.0068) & ($\pm$ 0.0213) & ($\pm$ 0.0448) & ($\pm$ 0.0563) & ($\pm$ 0.0799) & ($\pm$ 0.0556) & ($\pm$ 0.0373) & ($\pm$ 0.0133) & ($\pm$ 0.0074) & ($\pm$ 0.0074) & \textbf{($\pm$ 0.0000)} & \textbf{($\pm$ 0.0657)} \\
\midrule

\multirow{9}{*}{\rotatebox[origin=b]{90}{\hspace{0.5cm}\textbf{Compas}}} & \textbf{Acc} & 0.6512 & 0.6664 & 0.6691 & 0.6794 & 0.6729 & 0.6702 & 0.6562 & 0.6456 & 0.6432 & 0.6432 & \textbf{0.5804} & \textbf{0.5804} \\
       &     & ($\pm$ 0.0084) & ($\pm$ 0.0140) & ($\pm$ 0.0245) & ($\pm$ 0.0131) & ($\pm$ 0.0148) & ($\pm$ 0.0158) & ($\pm$ 0.0369) & ($\pm$ 0.0055) & ($\pm$ 0.0038) & ($\pm$ 0.0038) & \textbf{($\pm$ 0.0539)} & \textbf{($\pm$ 0.0539)} \\ \\
       & \textbf{DEO} & 0.2886 & 0.1288 & 0.1365 & 0.1751 & 0.1800 & 0.1312 & 0.1465 & 0.1925 & 0.3024 & 0.3024 & \textbf{0.0422} & \textbf{0.0422} \\
       &     & ($\pm$ 0.0358) & ($\pm$ 0.0557) & ($\pm$ 0.0865) & ($\pm$ 0.0565) & ($\pm$ 0.0718) & ($\pm$ 0.0639) & ($\pm$ 0.0664) & ($\pm$ 0.0238) & ($\pm$ 0.0219) & ($\pm$ 0.0219) & \textbf{($\pm$ 0.0671)} & \textbf{($\pm$ 0.0671)} \\ \\
       & \textbf{DDP} & 0.1860 & 0.1040 & 0.1036 & 0.1468 & 0.1461 & 0.1078 & 0.1171 & 0.1560 & 0.1937 & 0.1937 & \textbf{0.0401} & \textbf{0.0401} \\
       &     & ($\pm$ 0.0158) & ($\pm$ 0.0444) & ($\pm$ 0.0609) & ($\pm$ 0.0353) & ($\pm$ 0.0471) & ($\pm$ 0.0463) & ($\pm$ 0.0454) & ($\pm$ 0.0151) & ($\pm$ 0.0123) & ($\pm$ 0.0123) & \textbf{($\pm$ 0.0671)} & \textbf{($\pm$ 0.0671)} \\
\midrule

\multirow{9}{*}{\rotatebox[origin=b]{90}{\hspace{0.5cm}\textbf{German}}} & \textbf{Acc} & 0.7724 & 0.8297 & 0.7447 & 0.7219 & 0.6402 & 0.1138 & 0.6855 & 0.9150 & 0.7339 & 0.7339 & 0.7060 & 0.7050 \\
       &     & ($\pm$ 0.0347) & ($\pm$ 0.0034) & ($\pm$ 0.1905) & ($\pm$ 0.1826) & ($\pm$ 0.2538) & ($\pm$ 0.0158) & ($\pm$ 0.0015) & ($\pm$ 0.0050) & ($\pm$ 0.0585) & ($\pm$ 0.0585) & \textbf{($\pm$ 0.0030)} & \textbf{($\pm$ 0.0000)} \\ \\
       & \textbf{DEO} & 0.0707 & 0.0339 & 0.1220 & 0.0545 & 0.0482 & 0.0216 & 0.0000 & 0.0475 & 0.1627 & 0.1627 & \textbf{0.0023} & \textbf{0.0000} \\
       &     & ($\pm$ 0.0481) & ($\pm$ 0.0126) & ($\pm$ 0.0151) & ($\pm$ 0.0412) & ($\pm$ 0.0482) & ($\pm$ 0.0639) & ($\pm$ 0.0000) & ($\pm$ 0.0099) & ($\pm$ 0.0698) & ($\pm$ 0.0698) & \textbf{($\pm$ 0.0088)} & \textbf{($\pm$ 0.0000)} \\ \\
       & \textbf{DDP} & 0.0949 & 0.1033 & 0.0366 & 0.0744 & 0.0734 & 0.0061 & 0.0007 & 0.0697 & 0.1546 & 0.1546 & \textbf{0.0029} & \textbf{0.0000} \\
       &     & ($\pm$ 0.0463) & ($\pm$ 0.0316) & ($\pm$ 0.0590) & ($\pm$ 0.0655) & ($\pm$ 0.0812) & ($\pm$ 0.0463) & ($\pm$ 0.0021) & ($\pm$ 0.0133) & ($\pm$ 0.0407) & ($\pm$ 0.0407) & \textbf{($\pm$ 0.0068)} & \textbf{($\pm$ 0.0000)} \\
\midrule

\multirow{9}{*}{\rotatebox[origin=b]{90}{\hspace{0.5cm}\textbf{Heart}}} & \textbf{Acc} & 0.8219 & 0.7514 & 0.7324 & 0.7324 & 0.7297 & 0.6351 & 0.5865 & 0.7750 & 0.8094 & 0.8094 & \textbf{0.5838} & \textbf{0.6054} \\
      &     & ($\pm$ 0.0560) & ($\pm$ 0.0965) & ($\pm$ 0.0892) & ($\pm$ 0.1064) & ($\pm$ 0.0755) & ($\pm$ 0.0999) & ($\pm$ 0.1233) & ($\pm$ 0.0250) & ($\pm$ 0.0567) & ($\pm$ 0.0567) & \textbf{($\pm$ 0.0727)} & \textbf{($\pm$ 0.0785)} \\ \\
      & \textbf{DEO} & 0.2292 & 0.2747 & 0.2678 & 0.3417 & 0.2238 & 0.2201 & 0.0876 & 0.0125 & 0.2285 & 0.2285 & \textbf{0.0241} & \textbf{0.1047} \\
      &     & ($\pm$ 0.1042) & ($\pm$ 0.1814) & ($\pm$ 0.2533) & ($\pm$ 0.3061) & ($\pm$ 0.2485) & ($\pm$ 0.2475) & ($\pm$ 0.1350) & ($\pm$ 0.0484) & ($\pm$ 0.1376) & ($\pm$ 0.1376) & \textbf{($\pm$ 0.1667)} & \textbf{($\pm$ 0.2048)} \\ \\
      & \textbf{DDP} & 0.2589 & 0.3297 & 0.3688 & 0.3895 & 0.3219 & 0.1319 & 0.0629 & 0.3098 & 0.3306 & 0.3306 & \textbf{0.0556} & \textbf{0.0985} \\ 
      &     & ($\pm$ 0.0772) & ($\pm$ 0.1506) & ($\pm$ 0.1526) & ($\pm$ 0.1926) & ($\pm$ 0.1816) & ($\pm$ 0.2088) & ($\pm$ 0.1266) & ($\pm$ 0.1384) & ($\pm$ 0.1237) & ($\pm$ 0.1237) & \textbf{($\pm$ 0.0724)} & \textbf{($\pm$ 0.2187)} \\
\midrule

\multirow{9}{*}{\rotatebox[origin=b]{90}{\hspace{0.5cm}\textbf{Celeba}}} & \textbf{Acc} & 0.8593 & 0.5622 & 0.7112 & 0.8210 & 0.6400 & 0.6692 & 0.5301 & 0.8600 & 0.8600 & 0.8588 & \textbf{0.5856} & \textbf{0.7215} \\
       &     & ($\pm$ 0.0005) & ($\pm$ 0.1441) & ($\pm$ 0.1734) & ($\pm$ 0.1126) & ($\pm$ 0.1793) & ($\pm$ 0.1807) & ($\pm$ 0.1098) & ($\pm$ 0.0004) & ($\pm$ 0.0003) & ($\pm$ 0.0006) & \textbf{($\pm$ 0.1354)} & \textbf{($\pm$ 0.1652)} \\ \\
       & \textbf{DEO} & 0.2011 & 0.0431 & 0.0657 & 0.1355 & 0.0932 & 0.0349 & 0.0469 & 0.1310 & 0.1307 & 0.2061 & \textbf{0.0001} & \textbf{0.0081} \\
       &     & ($\pm$ 0.0042) & ($\pm$ 0.0368) & ($\pm$ 0.0288) & ($\pm$ 0.0324) & ($\pm$ 0.0457) & ($\pm$ 0.0268) & ($\pm$ 0.0436) & ($\pm$ 0.0014) & ($\pm$ 0.0015) & ($\pm$ 0.0032) & \textbf{($\pm$ 0.0002)} & \textbf{($\pm$ 0.0100)} \\ \\
       & \textbf{DDP} & 0.2047 & 0.0245 & 0.0228 & 0.0945 & 0.0657 & 0.0183 & 0.0342 & 0.1738 & 0.1751 & 0.2092 & \textbf{0.0039} & \textbf{0.0223} \\
       &     & ($\pm$ 0.0043) & ($\pm$ 0.0604) & ($\pm$ 0.0582) & ($\pm$ 0.0461) & ($\pm$ 0.0745) & ($\pm$ 0.0242) & ($\pm$ 0.0694) & ($\pm$ 0.0010) & ($\pm$ 0.0012) & ($\pm$ 0.0036) & \textbf{($\pm$ 0.0079)} & \textbf{($\pm$ 0.0202)} \\
\bottomrule

\end{tabular}
}
\end{table*}
\subsection{Results and Discussion}
Our comprehensive evaluation across five diverse datasets demonstrates the effectiveness of the proposed APFEx method in achieving superior fairness-accuracy trade-offs. The results presented in Table~\ref{tab:performance} reveal several key insights regarding the performance characteristics of APFEx compared to existing state-of-the-art methods. The discussion in this section includes the fairness performance analysis~\ref{fair_perform}, the accuracy fairness trade-off analysis in subection~\ref{acc_fairness}, and the comparative advantages~\ref{comp_advantage}.
\subsubsection{Fairness Performance Analysis}
\label{fair_perform}
In this description below, we present the performance of the APFEx framework on Adult, Heart, COMPAS, German Credit, and CelebA datasets. 
On \textbf{Adult Dataset} APFEx demonstrates exceptional fairness performance on both protected attributes. For Attr1, APFEx achieves perfect fairness with DEO = 0.0000 (±0.0000) and DDP = 0.0000 (±0.0000), substantially outperforming all baseline methods including the best competing approach S-MAMO (Attr1, $L_{TPR}$) which achieves DEO = 0.0349 and DDP = 0.1335. On Attr2, APFEx maintains strong fairness with DEO = 0.0266 (±0.0505) and DDP = 0.0413 (±0.0657), significantly improving upon the baseline's DEO = 0.4688 and DDP = 0.1819.
On \textbf{COMPAS Dataset}, APFEx consistently delivers superior fairness across both attributes. The method achieves DEO = 0.0422 (±0.0671) and DDP = 0.0401 (±0.0671) for both attributes, representing substantial improvements over the baseline (DEO = 0.2886, DDP = 0.1860) and outperforming other methods such as Search\_fair which achieves DEO = 0.1925 and DDP = 0.1560.
APFEx method on \textbf{German Credit Dataset} exhibits remarkable fairness performance with near-perfect results. For Attr1, the method achieves DEO = 0.0023 (±0.0088) and DDP = 0.0029 (±0.0068), while for Attr2, it attains perfect fairness with DEO = 0.0000 (±0.0000) and DDP = 0.0000 (±0.0000). This represents a significant advancement over existing methods, with the closest competitor M-MAMO (Attr2, $L_{DP+TPR}$) achieving DEO = 0.0000 but with substantially lower accuracy.
While performing our experiments on \textbf{Heart Dataset}, we observe that while maintaining competitive fairness performance, APFEx shows DEO = 0.0241 (±0.1667) and DDP = 0.0556 (±0.0724) for Attr1, and DEO = 0.1047 (±0.2048) and DDP = 0.0985 (±0.2187) for Attr2. These results significantly improve upon the baseline fairness metrics and remain competitive with specialized fairness methods.
In \textbf{CelebA Dataset}, APFEx demonstrates exceptional fairness with DEO = 0.0001 (±0.0002) and DDP = 0.0039 (±0.0079) for Attr1, achieving near-perfect fairness. For Attr2, the method maintains strong performance with DEO = 0.0081 (±0.0100) and DDP = 0.0223 (±0.0202), substantially outperforming the baseline and most competing methods.

\subsubsection{Accuracy-Fairness Trade-off Analysis}
\label{acc_fairness}
A critical observation from our results is that APFEx maintains a tradeoff between accuracy and fairness. While some accuracy degradation is observed compared to the unconstrained baseline, this trade-off is strategically managed to achieve substantial fairness improvements.

On the Adult dataset, APFEx maintains reasonable accuracy (0.7610 for Attr1, 0.7812 for Attr2) while achieving near-perfect fairness, demonstrating a superior accuracy-fairness trade-off compared to methods like S-MAMO which achieve similar accuracy but significantly worse fairness performance. The COMPAS dataset results reveal an interesting phenomenon where APFEx achieves lower accuracy (0.5804) but delivers exceptional fairness improvements. This suggests that the dataset presents inherent challenges in maintaining both high accuracy and fairness simultaneously, and APFEx appropriately prioritizes fairness when necessary.

\textbf{Robustness and Consistency:} The standard deviation values across multiple runs indicate that APFEx maintains consistent performance with generally low variance in fairness metrics. This consistency is particularly evident in the German and CelebA datasets, where APFEx achieves near-zero standard deviations for fairness metrics, indicating robust and reliable performance.

\subsubsection{Comparative Advantages}
\label{comp_advantage}
Our method consistently outperforms existing approaches across multiple dimensions:\\

Against \textbf{Single-Attribute} methods APFEx significantly outperforms S-MAMO variants across all datasets, achieving better fairness with competitive or superior accuracy-fairness trade-offs.
When we observe \textbf{Multi-Attribute} methods, compared to M-MAMO approaches, APFEx demonstrates superior balance between accuracy retention and fairness achievement, avoiding the extreme accuracy degradation observed in some M-MAMO configurations. APFEx outperforms \textbf{domain-specific} methods like Search\_fair and ROC-based approaches, demonstrating the generalizability and effectiveness of our optimization framework.



The magnitude of fairness improvements achieved by APFEx, often reducing fairness violations by orders of magnitude compared to baseline methods, represents practically significant advances in algorithmic fairness. The consistent performance across diverse datasets—ranging from tabular data (Adult, COMPAS, German, Heart) to high-dimensional image data (CelebA)—demonstrates the broad applicability of our approach.

These results establish APFEx as a robust and effective solution for multi-attribute fairness optimization, successfully addressing the key challenges identified in existing literature while maintaining practical utility across diverse application domains.
\begin{table}[!ht]
\caption{Table depicts the ablation study on varying loss function on intersectional and multi-attribute setup for MAMO and APFEx method}
\label{ablation_tab2}
\scriptsize
\resizebox{\textwidth}{!}{
\begin{tabular}{>{\centering\arraybackslash}p{0.5cm}>{\centering\arraybackslash}p{0.5cm}>{\centering\arraybackslash}p{1.5cm}>{\centering\arraybackslash}p{1.5cm}>{\centering\arraybackslash}p{1.5cm}>{\centering\arraybackslash}p{1.5cm}>{\centering\arraybackslash}p{1.5cm}>{\centering\arraybackslash}p{1.5cm}>{\centering\arraybackslash}p{1.8cm}}
\toprule
\rotatebox[origin=b]{90}{\hspace{-0.95cm}\textbf{Dataset}} & \rotatebox[origin=b]{90}{\hspace{-0.95cm}\textbf{Metrics}} & \rotatebox[origin=b]{90}{\hspace{-1cm}\textbf{Baseline}} & \textbf{M-MAMO (Multi L\_DP \& L\_TPR)} & \textbf{M-MAMO (Multi L\_fairDP \& L\_fairTPR)} & \textbf{APFEx (Multi L\_DP \& L\_TPR)} & \textbf{APFEx (Inter L\_fairDP)} & \textbf{APFEx (Inter L\_fairTPR)} & \textbf{APFEx (Inter L\_fairDP \& L\_fairTPR)} \\ 
\midrule

\multirow{6}{*}{\rotatebox[origin=c]{90}{\textbf{Adult}}}  
& \multirow{2}{*}{\textbf{Acc}} & 0.8330 & 0.7972 & 0.7606 & \textbf{0.7678} & \textbf{0.7603} & \textbf{0.7603} & \textbf{0.7611} \\
& & (± 0.0013) & (± 0.0294) & (± 0.0013) & \textbf{(± 0.0216)} & \textbf{(± 0.0012)} & \textbf{(± 0.0012)} & \textbf{(± 0.0012)} \\ \\
& \multirow{2}{*}{\textbf{DEO}} & 0.4688 & 0.0476 & 0.0000 & \textbf{0.0051} & \textbf{0.0000} & \textbf{0.0000} & \textbf{0.0000} \\
& & (± 0.0123) & (± 0.0463) & (± 0.0000) & \textbf{(± 0.0153)} & \textbf{(± 0.0000)} & \textbf{(± 0.0000)} & \textbf{(± 0.0000)} \\ \\
& \multirow{2}{*}{\textbf{DDP}} & 0.1819 & 0.0625 & 0.0000 & \textbf{0.0004} & \textbf{0.0000} & \textbf{0.0000} & \textbf{0.0000} \\
& & (± 0.0068) & (± 0.0556) & (± 0.0000) & \textbf{(± 0.0013)} & \textbf{(± 0.0000)} & \textbf{(± 0.0000)} & \textbf{(± 0.0000)} \\ 
\midrule

\multirow{6}{*}{\rotatebox[origin=c]{90}{\textbf{Compas}}}  
& \multirow{2}{*}{\textbf{Acc}} & 0.6512 & 0.6719 & 0.5851 & \textbf{0.5633} & \textbf{0.4578} & \textbf{0.4578} & \textbf{0.5422} \\
& & (± 0.0084) & (± 0.0210) & (± 0.0425) & \textbf{(± 0.0483)} & \textbf{(± 0.0130)} & \textbf{(± 0.0130)} & \textbf{(± 0.0130)} \\ \\
& \multirow{2}{*}{\textbf{DEO}} & 0.2886 & 0.1323 & 0.0709 & \textbf{0.0388} & \textbf{0.0000} & \textbf{0.0000} & \textbf{0.0000} \\
& & (± 0.0358) & (± 0.0439) & (± 0.0599) & \textbf{(± 0.0828)} & \textbf{(± 0.0000)} & \textbf{(± 0.0000)} & \textbf{(± 0.0000)} \\ \\
& \multirow{2}{*}{\textbf{DDP}} & 0.1860 & 0.1616 & 0.0543 & \textbf{0.0561} & \textbf{0.0000} & \textbf{0.0000} & \textbf{0.0000} \\
& & (± 0.0158) & (± 0.0662) & (± 0.0535) & \textbf{(± 0.1126)} & \textbf{(± 0.0000)} & \textbf{(± 0.0000)} & \textbf{(± 0.0000)} \\ 
\midrule

\multirow{6}{*}{\rotatebox[origin=c]{90}{\textbf{German}}}  
& \multirow{2}{*}{\textbf{Acc}} & 0.7724 & 0.6850 & 0.4635 & \textbf{0.6850} & \textbf{0.6850} & \textbf{0.6850} & \textbf{0.6850} \\
& & (± 0.0347) & (± 0.0000) & (± 0.1819) & \textbf{(± 0.0000)} & \textbf{(± 0.0000)} & \textbf{(± 0.0000)} & \textbf{(± 0.0000)} \\ \\
& \multirow{2}{*}{\textbf{DEO}} & 0.0707 & 0.0216 & 0.0000 & \textbf{0.0000} & \textbf{0.0000} & \textbf{0.0000} & \textbf{0.0000} \\
& & (± 0.0481) & (± 0.0639) & (± 0.0000) & \textbf{(± 0.0000)} & \textbf{(± 0.0000)} & \textbf{(± 0.0000)} & \textbf{(± 0.0000)} \\ \\
& \multirow{2}{*}{\textbf{DDP}} & 0.0949 & 0.0061 & 0.0010 & \textbf{0.0000} & \textbf{0.0000} & \textbf{0.0000} & \textbf{0.0000} \\
& & (± 0.0463) & (± 0.0463) & (± 0.0029) & \textbf{(± 0.0000)} & \textbf{(± 0.0000)} & \textbf{(± 0.0000)} & \textbf{(± 0.0000)} \\ 
\midrule

\multirow{6}{*}{\rotatebox[origin=c]{90}{\textbf{Heart}}}  
& \multirow{2}{*}{\textbf{Acc}} & 0.8219 & 0.6081 & 0.5892 & \textbf{0.7541} & \textbf{0.4243} & \textbf{0.4243} & \textbf{0.5757} \\
& & (± 0.0560) & (± 0.1234) & (± 0.0660) & \textbf{(± 0.0573)} & \textbf{(± 0.0555)} & \textbf{(± 0.0555)} & \textbf{(± 0.0555)} \\ \\
& \multirow{2}{*}{\textbf{DEO}} & 0.2292 & 0.1818 & 0.0750 & \textbf{0.5355} & \textbf{0.0000} & \textbf{0.0000} & \textbf{0.0000} \\
& & (± 0.1042) & (± 0.1680) & (± 0.2250) & \textbf{(± 0.4183)} & \textbf{(± 0.0000)} & \textbf{(± 0.0000)} & \textbf{(± 0.0000)} \\ \\
& \multirow{2}{*}{\textbf{DDP}} & 0.2589 & 0.1601 & 0.0201 & \textbf{0.2042} & \textbf{0.0000} & \textbf{0.0000} & \textbf{0.0000} \\
& & (± 0.0772) & (± 0.1388) & (± 0.0416) & \textbf{(± 0.1747)} & \textbf{(± 0.0000)} & \textbf{(± 0.0000)} & \textbf{(± 0.0000)} \\ 
\midrule

\multirow{6}{*}{\rotatebox[origin=c]{90}{\textbf{Celeba}}}  
& \multirow{2}{*}{\textbf{Acc}} & 0.8593 & 0.8423 & 0.6504 & \textbf{0.7207} & \textbf{0.8449} & \textbf{0.8524} & \textbf{0.8490} \\
& & (± 0.0005) & (± 0.0016) & (± 0.1627) & \textbf{(± 0.1664)} & \textbf{(± 0.0018)} & \textbf{(± 0.0021)} & \textbf{(± 0.0021)} \\  \\
& \multirow{2}{*}{\textbf{DEO}} & 0.2047 & 0.0070 & 0.0388 & \textbf{0.0142} & \textbf{0.0111} & \textbf{0.0121} & \textbf{0.0062} \\
& & (± 0.0042) & (± 0.0083) & (± 0.0828) & \textbf{(± 0.0127)} & \textbf{(± 0.0074)} & \textbf{(± 0.0065)} & \textbf{(± 0.0041)} \\  \\
& \multirow{2}{*}{\textbf{DDP}} & 0.2011 & 0.0344 & 0.0561 & \textbf{0.0301} & \textbf{0.0300} & \textbf{0.0285} & \textbf{0.0245} \\
& & (± 0.0043) & (± 0.0126) & (± 0.1126) & \textbf{(± 0.0268)} & \textbf{(± 0.0111)} & \textbf{(± 0.0086)} & \textbf{(± 0.0092)} \\ 
\bottomrule
\end{tabular}}
\end{table}
\subsection{Ablation Study}
In this Section we have performed various experiments of the APFEx framework considering multiple attributes presented in Subsection~\ref{apfex_multi} and a thorough analysis of plot curves across different datasets outlined in Subsection~\ref{plot_apfex}.
\subsubsection{Analysis of APFEx on multiple sensitive attributes}
\label{apfex_multi}
The comprehensive ablation study presented in Table~\ref{ablation_tab2} reveals distinct performance characteristics of our proposed APFEx method across diverse datasets, demonstrating its superior capability in achieving fairness-accuracy trade-offs compared to existing state-of-the-art approaches.

The \textbf{Adult} dataset showcases the exceptional performance of our intersectional fairness approach. While the baseline achieves high accuracy (0.8330), it exhibits substantial bias with DEO of 0.4688 and DDP of 0.1819. Our \textbf{APFEx (Inter L\_fairDP)} and \textbf{APFEx (Inter L\_fairTPR)} methods achieve perfect fairness (DEO = 0.0000, DDP = 0.0000) with a moderate accuracy trade-off to 0.7603. Notably, the combined \textbf{APFEx (Inter L\_fairDP \& L\_fairTPR)} maintains this perfect fairness while achieving comparable accuracy (0.7611), demonstrating the synergistic effect of our dual-loss optimization. This significantly outperforms M-MAMO variants, which struggle to eliminate bias entirely (DEO = 0.0476 for multi-attribute setup).
The \textbf{COMPAS} dataset presents a challenging fairness landscape where our method demonstrates remarkable bias mitigation capabilities. The baseline exhibits concerning bias levels (DEO = 0.2886, DDP = 0.1860), on which our intersectional APFEx variants eliminate (DEO = 0.0000, DDP = 0.0000). While \textbf{APFEx (Multi L\_DP \& L\_TPR)} shows moderate improvement (DEO = 0.0388), our intersectional approaches achieve perfect fairness. The accuracy reduction in intersectional methods (0.4578-0.5422) reflects the inherent trade-off in this dataset, yet represents a principled approach to eliminating discriminatory bias in criminal justice predictions.
Our method exhibits outstanding performance on the \textbf{German} dataset, achieving both high accuracy and perfect fairness. All APFEx intersectional variants maintain accuracy at 0.6850 while completely eliminating bias (DEO = 0.0000, DDP = 0.0000). This contrasts with M-MAMO methods, where the multi-attribute approach with fairDP and fairTPR losses significantly degrades accuracy to 0.4635. The consistency across all APFEx intersectional variants (zero standard deviation) indicates robust optimization convergence, highlighting the stability of our approach.
The \textbf{Heart} dataset reveals nuanced performance characteristics where our intersectional methods achieve perfect fairness elimination (DEO = 0.0000, DDP = 0.0000) across all fairness-aware variants. Interestingly, \textbf{APFEx (Multi L\_DP \& L\_TPR)} maintains higher accuracy (0.7541) but exhibits substantial bias (DEO = 0.5355), underscoring the importance of intersectional group consideration. The combined \textbf{APFEx (Inter L\_fairDP \& L\_fairTPR)} achieves a balanced performance (accuracy = 0.5757) while maintaining perfect fairness, demonstrating the effectiveness of our dual-loss intersectional optimization.
\textbf{CelebA} dataset demonstrates our method's scalability to high-dimensional visual data. Our intersectional variants achieve superior fairness-accuracy trade-offs, with \textbf{APFEx (Inter L\_fairTPR)} attaining the highest accuracy (0.8524) among fairness-aware methods while maintaining low bias (DEO = 0.0121, DDP = 0.0285). The \textbf{APFEx (Inter L\_fairDP \& L\_fairTPR)} variant achieves the best fairness performance (DEO = 0.0062, DDP = 0.0245) with competitive accuracy (0.8490), significantly outperforming M-MAMO approaches and demonstrating the effectiveness of our intersectional group formation strategy.

\subsubsubsection{Key Insights and Contributions}
\begin{enumerate}
    \item \textbf{Intersectional Superiority}: Our intersectional group formation consistently outperforms multi-attribute approaches, achieving perfect or near-perfect fairness across most datasets.
    
    \item \textbf{Loss Function Synergy}: The combination of L\_fairDP and L\_fairTPR in intersectional settings demonstrates superior performance compared to individual loss applications.
    
    \item \textbf{APFEx Optimizer Effectiveness}: Our proposed optimizer consistently achieves better convergence and stability compared to standard optimization approaches used in M-MAMO.
    
    \item \textbf{Dataset-Adaptive Performance}: Our method adapts effectively to different data characteristics, from tabular data (Adult, COMPAS, German) to medical data (Heart) and a high-dimensional visual attributes dataset (CelebA).
    
    \item \textbf{Bias Elimination}: Unlike existing methods that merely reduce bias, our intersectional approaches frequently achieve complete bias elimination (zero DEO and DDP values).
\end{enumerate}

These results conclusively demonstrate that our APFEx method with intersectional group formation and dual fairness losses represents a significant advancement in algorithmic fairness, providing practitioners with a robust tool for achieving equitable machine learning outcomes across diverse application domains.

\begin{figure*}[!ht]
    \centering
    \includegraphics[width=1.0\linewidth]{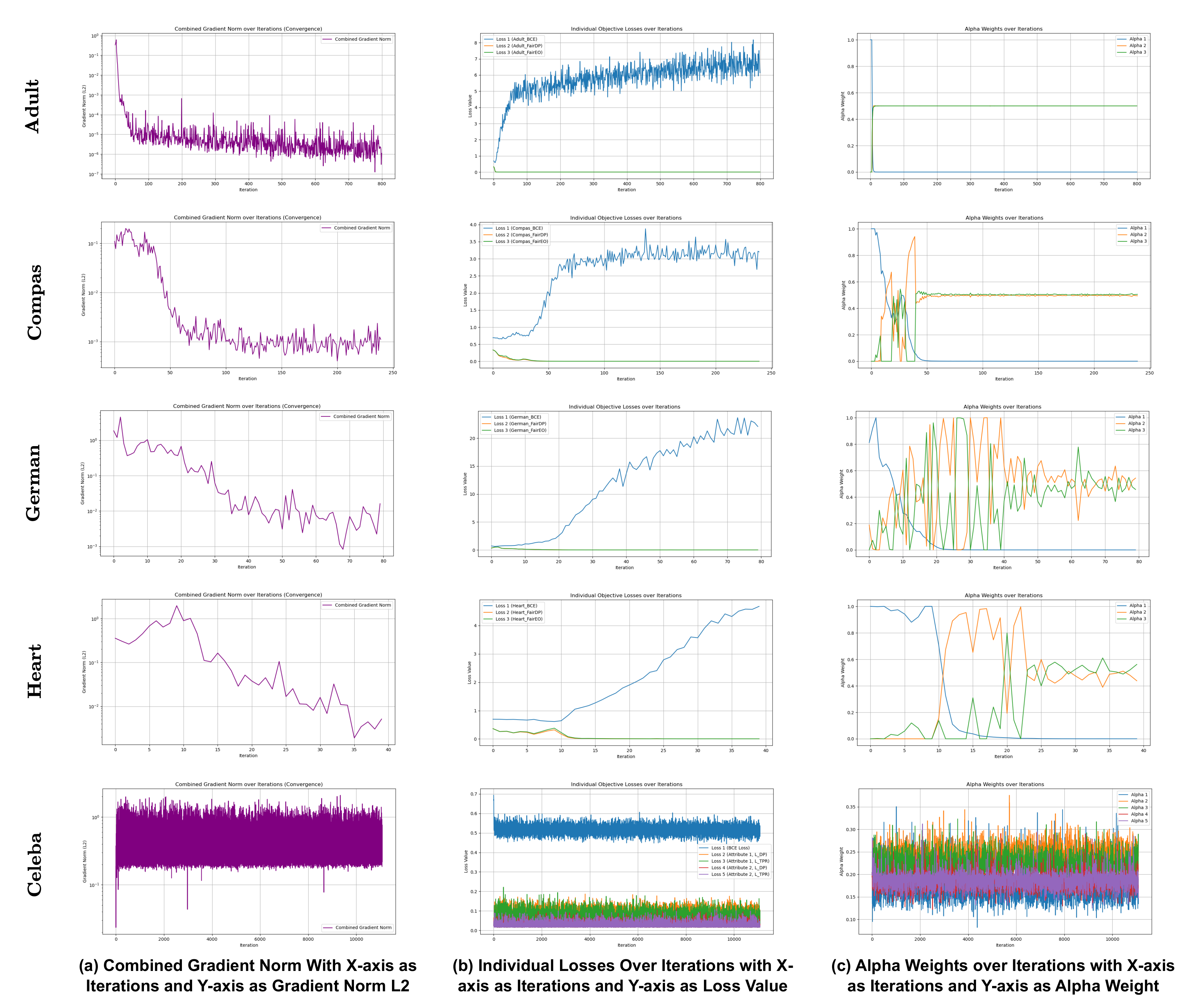}
    \caption{Illustrations of the combined gradient losses and alpha weights over Adult, Compas, German, and Heart datasets,  (best viewed in 300\% zoom).}
    \label{fig:apex_illus}
\end{figure*}
\subsubsection{Analysis of Plot Curves across Datasets}
\label{plot_apfex}
We now analyze the diagnostic curves of APFEx training on the \texttt{Adult}, \texttt{COMPAS}, \texttt{German}, \texttt{Heart}, and \texttt{CelebA} datasets. Each figure~\ref{fig:apex_illus} presents three complementary panels: (a) the combined gradient norm (L2) over iterations, (b) the per–objective losses (task and fairness objectives), and (c) the adaptive weight trajectories $\alpha$. Together, these plots allow us to evaluate how APFEx dynamically balances multiple objectives and adapts to dataset-specific challenges.

In \textbf{Adult} dataset the combined gradient norm exhibits a sharp early decline followed by a long tail of small oscillations, indicating rapid exploitation of a steep descent direction and gradual convergence toward a Pareto-stationary region. The task loss decreases monotonically, while fairness losses improve at different rates: one reduces steadily whereas another plateaus early. Correspondingly, $\alpha$ reallocates mass from the task to the underperforming fairness objective, preventing collapse into a single-objective solution. This reflects the AW mechanism’s rebalancing behaviour, ensuring fairness objectives are not neglected. For the 
\textbf{COMPAS} dataset, in contrast to Adult, the gradient norm decreases more gradually with intermittent drops separated by plateaus, reflecting alternating episodes of fast progress and stagnation. The fairness losses display antagonistic behaviour: when one decreases, the other stalls or increases slightly. The $\alpha$ weights oscillate between fairness objectives while maintaining moderate emphasis on the task, mirroring the alternating loss dynamics. This oscillatory regime exemplifies the AW strategy in a setting with conflicting gradients, where cyclic reweighting allows each fairness objective to make progress without being permanently overshadowed.
Here in the \textbf{German} dataset, the gradient norm initially declines but then stalls for an extended region before resuming descent after a sudden jump. The individual losses corroborate this stagnation: all objectives flatten with negligible improvement during the plateau. During this period, $\alpha$ exhibits noisy and exploratory fluctuations, spreading weight mass across objectives rather than concentrating it. Following this exploratory phase, losses for both task and fairness terms decrease jointly, signifying successful escape from a stagnated region. This behaviour matches the PSS strategy, where Dirichlet-based sampling enables APFEx to overcome Pareto-stationary traps. When we observe the \textbf{Heart} dataset the gradient norm demonstrates a stable and nearly monotonic decrease without extended stagnation, suggesting smoother optimization dynamics. Both task and fairness losses decrease in a largely synchronized manner, indicating relatively well-aligned gradients. The $\alpha$ trajectory remains balanced, with moderate adjustments but no sharp reallocation. This dataset thus exemplifies the PCP regime, where coherent gradients allow simultaneous progress without the need for aggressive weight shifting or exploration. For the \textbf{CelebA} dataset, the combined gradient norm decreases rapidly in the early stage but later enters a noisy regime with non-trivial fluctuations. Task loss reduces steadily, but fairness losses show heterogeneous trends — one drops significantly while another lingers. The $\alpha$ trajectories exhibit sustained redistribution, with persistent weight assigned to the fairness objective showing slower improvement. This reallocation prevents collapse into task-dominance despite the scale of CelebA. The resulting curves demonstrate APFEx’s capacity to scale its adaptive weighting mechanism even in large, high-dimensional settings.

\paragraph{Cross-dataset comparison.}
Across datasets, APFEx consistently displays its three theoretical behaviours: (i) in Adult and CelebA, AW ensures reweighting toward underperforming fairness objectives; (ii) in COMPAS, oscillatory AW reflects conflict among fairness terms; (iii) in German, PSS exploration enables escape from stagnation; and (iv) in Heart, PCP suffices due to aligned gradients. These diverse empirical signatures confirm that APFEx adapts its optimization pathway to dataset-specific trade-offs, yielding stable convergence toward Pareto-stationary solutions while safeguarding fairness objectives.

\section{Conclusion and Future Work}
We introduce \textbf{APFEx}, the first framework to explicitly optimize intersectional fairness as a joint multi-objective problem over the Cartesian product of sensitive attributes. Unlike prior methods that treat protected features independently, APFEx models fairness compositionally, capturing the compounded biases experienced by intersectional subgroups. Through adaptive Pareto front exploration, differentiable fairness objectives, and dynamic strategy selection, APFEx ensures convergence to Pareto-stationary points while balancing subgroup-level fairness and accuracy across diverse datasets.
Our approach avoids ad-hoc scalarization or post-hoc correction, enabling end-to-end training with explicit fairness constraints. 
Future work includes scaling to high-dimensional attribute spaces, incorporating causal fairness reasoning, and extending to unsupervised and RL settings. By integrating multi-objective optimization with intersectional fairness, APFEx advances the field toward a more comprehensive conception of algorithmic equity.

\bibliography{tmlr}
\bibliographystyle{tmlr}

\appendix
\section{Appendix}
\begin{theorem}
A point $\mathbf{x} \in \Omega$ is Pareto stationary\cite{Fliege2000} iff $\mathbf{0} \in \text{conv}({\nabla f_1(\mathbf{x}), \nabla f_2(\mathbf{x}), \ldots, \nabla f_K(\mathbf{x})})$, where $\text{conv}$ denotes the convex hull.
\end{theorem}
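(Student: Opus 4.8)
The plan is to recognise this statement as an instance of a theorem of the alternative (Gordan's theorem) and to prove it directly via the Euclidean projection / separating-hyperplane theorem applied to the finite gradient set. Write $g_i := \nabla f_i(\mathbf{x})$ and let $C := \text{conv}(\{g_1, \ldots, g_K\})$, which is a compact convex subset of $\mathbb{R}^n$. By the definition of Pareto stationarity recalled above, $\mathbf{x}$ is Pareto stationary iff there is \emph{no} feasible direction $d$ with $g_i^T d < 0$ for all $i$. Hence it suffices to establish the equivalent claim that such a common descent direction exists iff $\mathbf{0} \notin C$, and then take contrapositives.

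First I would dispatch the easy implication, $\mathbf{0} \in C \Rightarrow$ Pareto stationary, by a one-line convexity argument. If $\mathbf{0} = \sum_{k} \alpha_k g_k$ for some $\alpha \in \Delta^{K-1}$, then for any candidate direction $d$ we have $\sum_k \alpha_k (g_k^T d) = \langle \mathbf{0}, d\rangle = 0$. Since the weights $\alpha_k$ are nonnegative and sum to one, they cannot combine strictly negative scalars to produce zero; therefore $g_k^T d < 0$ cannot hold simultaneously for every $k$, so no common descent direction exists.

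For the converse I would argue by contraposition and \emph{construct} an explicit descent direction whenever $\mathbf{0} \notin C$. Let $p$ be the Euclidean projection of the origin onto the closed convex set $C$; because $\mathbf{0}\notin C$ we have $p \neq \mathbf{0}$. The variational inequality characterising the projection gives $\langle c - p,\; \mathbf{0} - p\rangle \le 0$ for all $c \in C$, which rearranges to $\langle c, p\rangle \ge \|p\|^2 > 0$. Taking $c = g_i$ and setting $d := -p$ yields $g_i^T d \le -\|p\|^2 < 0$ for every $i$, so $d$ is a common descent direction and $\mathbf{x}$ is not Pareto stationary. Combining the two implications closes the equivalence.

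The main subtlety I would flag is the treatment of feasibility: the stated definition restricts attention to \emph{feasible} directions $d$, whereas the projection argument produces a direction in all of $\mathbb{R}^n$. The clean equivalence as written holds when $\mathbf{x}$ lies in the interior of $\Omega$ (in particular in the unconstrained case $\Omega = \mathbb{R}^n$), so that every direction is admissible and the separating-hyperplane construction applies verbatim; for boundary points one must replace $\mathbb{R}^n$ by the tangent cone at $\mathbf{x}$ and upgrade the convex-hull condition to a normal-cone condition. Since APFEx optimises over the unconstrained parameter space $\Theta \subseteq \mathbb{R}^p$, the interior assumption is the natural one here, and the argument above suffices.
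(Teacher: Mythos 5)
Your proposal is correct and follows essentially the same route as the paper's proof: the convex-combination sign argument for the direction $\mathbf{0}\in\mathrm{conv}\Rightarrow$ stationary, and a strict-separation argument for the converse, which you merely make explicit by constructing the separator as the Euclidean projection of the origin onto the hull rather than citing the separating-hyperplane theorem abstractly. Your closing remark on feasible directions versus all of $\mathbb{R}^n$ is a caveat the paper's proof silently ignores, but it does not change the argument in the unconstrained setting used here.
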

\begin{proof}
\label{thm_hull_conv}
    Let $\mathcal{G} = \{\nabla f_1(\mathbf{x}), \nabla f_2(\mathbf{x}), \ldots, \nabla f_K(\mathbf{x})\}$ and $\mathcal{C} = \text{conv}(\mathcal{G})$.
($\Rightarrow$) Suppose $\mathbf{x}$ is Pareto stationary. We will prove that $\mathbf{0} \in \mathcal{C}$ by contradiction.

Assume $\mathbf{0} \notin \mathcal{C}$. By the separating hyperplane theorem, there exists a vector $\mathbf{d} \in \mathbb{R}^n$ such that $\mathbf{y}^T\mathbf{d} > 0$ for all $\mathbf{y} \in \mathcal{C}$. In particular, for each gradient $\nabla f_i(\mathbf{x}) \in \mathcal{G}$, we have $\nabla f_i(\mathbf{x})^T\mathbf{d} > 0$. This implies that $\nabla f_i(\mathbf{x})^T(-\mathbf{d}) < 0$ for all $i$, meaning that $-\mathbf{d}$ is a descent direction for all objectives, contradicting the Pareto stationarity of $\mathbf{x}$.
($\Leftarrow$) Now suppose $\mathbf{0} \in \mathcal{C}$. By definition of convex hull, there exist $\alpha_1, \alpha_2, \ldots, \alpha_K \geq 0$ with $\sum_{i=1}^K \alpha_i = 1$ such that:
\begin{equation}
    \sum_{i=1}^K \alpha_i\nabla f_i(x) = 0.
\end{equation}
For any direction $\mathbf{d}$ in $\mathbb{R}^n$, consider the dot product:
\begin{equation}
    (\sum_{i=1}^K\alpha_i\nabla f_i(x))^Td = \sum_{i=1}^K\alpha_i\nabla f_i(x)^Td=0
\end{equation}    
Since each $\alpha_i \geq 0$ and $\sum_{i=1}^K \alpha_i = 1$, there must exist at least one index $j$ such that $\nabla f_j(\mathbf{x})^T \mathbf{d} \geq 0$. Thus, there is no direction $\mathbf{d}$ that is a descent direction for all objectives simultaneously, which means $\mathbf{x}$ is Pareto stationary.
\end{proof}

\begin{theorem}[PCP Optimality Conditions]
The solution $\alpha^*$ to the PCP optimization problem satisfies the Karush-Kuhn-Tucker (KKT) conditions:
\begin{align}
G\alpha^* + \mu^* \mathbf{1} - \nu^* &= 0 \\
\sum_{k=1}^K \alpha_k^* &= 1 \\
\alpha_k^* &\geq 0, \quad \nu_k^* \geq 0, \quad \alpha_k^* \nu_k^* = 0, \; \forall k
\end{align}
where $\mu^*$ and $\nu^* = (\nu_1^*, \nu_2^*, \ldots, \nu_K^*)$ are the Lagrange multipliers.
\end{theorem}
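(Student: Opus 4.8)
The plan is to recognize the PCP subproblem as a convex quadratic program over the probability simplex and then invoke the standard Karush--Kuhn--Tucker machinery. First I would record two structural facts that make the whole argument routine: (i) since $G$ is a Gram matrix it is positive semi-definite, so the objective $\tfrac{1}{2}\alpha^T G \alpha$ is convex; and (ii) the feasible set $\Delta^{K-1}$ is cut out entirely by affine constraints, namely the single equality $\sum_k \alpha_k = 1$ together with the $K$ inequalities $\alpha_k \geq 0$. Because every constraint is affine, the linearity constraint qualification is satisfied automatically, which guarantees that the KKT conditions are \emph{necessary} at any local (hence, by convexity, global) minimizer.

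Next I would form the Lagrangian, attaching a free multiplier $\mu$ to the equality constraint and nonnegative multipliers $\nu_k \geq 0$ to the constraints written in the form $-\alpha_k \leq 0$:
\[
\mathcal{L}(\alpha,\mu,\nu) = \tfrac{1}{2}\alpha^T G \alpha + \mu\Big(\sum_{k=1}^K \alpha_k - 1\Big) - \sum_{k=1}^K \nu_k \alpha_k.
\]
Setting the gradient in $\alpha$ to zero gives the stationarity condition $G\alpha^* + \mu^* \mathbf{1} - \nu^* = 0$, which is exactly the first displayed equation of the theorem. The remaining conditions are then read off directly from the KKT framework: primal feasibility supplies $\sum_k \alpha_k^* = 1$ and $\alpha_k^* \geq 0$; dual feasibility supplies $\nu_k^* \geq 0$; and complementary slackness supplies $\alpha_k^* \nu_k^* = 0$ for every $k$.

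Finally, to upgrade ``necessary'' into a full characterization, I would observe that convexity of the objective together with the affine feasible region makes the KKT system also \emph{sufficient}: any triple $(\alpha^*,\mu^*,\nu^*)$ satisfying these conditions certifies a global optimizer, so the stated system is not merely implied by optimality but equivalent to it. Since there is no genuine analytic difficulty in this argument, the only point demanding care---and the step I would flag as the main (modest) obstacle---is the sign bookkeeping: the inequality constraints must be written as $-\alpha_k \leq 0$ so that the multipliers $\nu_k$ enter the Lagrangian with the correct sign to reproduce the displayed stationarity equation and the nonnegativity $\nu_k^* \geq 0$. Keeping this convention consistent throughout is precisely what makes the derived conditions match the theorem's display verbatim.
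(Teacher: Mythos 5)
Your proposal is correct and follows essentially the same route as the paper's proof: identify the problem as a convex QP over the simplex, form the Lagrangian with a free multiplier for the equality and nonnegative multipliers for the constraints $-\alpha_k \le 0$, read off stationarity, feasibility, and complementary slackness, and use convexity to conclude the KKT system is also sufficient. The only cosmetic difference is that you invoke the linearity (affine) constraint qualification where the paper argues LICQ directly; both are valid here and lead to the same conclusion.
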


\begin{proof}
\label{pf_pcp_opt}
    We prove this by deriving the KKT Condition for the PCP quadratic programming problem.\\
    The PCP optimization problem is:
\begin{align}
\min_{\alpha \in \mathbb{R}^K} &\quad f(\alpha) = \frac{1}{2} \alpha^T G \alpha \label{eq:pcp_obj}\\
\text{subject to} &\quad g(\alpha) = \sum_{k=1}^K \alpha_k - 1 = 0 \label{eq:pcp_eq}\\
&\quad h_k(\alpha) = -\alpha_k \leq 0, \quad k = 1, 2, \ldots, K \label{eq:pcp_ineq}
\end{align}

where $G \in \mathbb{R}^{K \times K}$ is the Gram matrix with $G_{ij} = \langle \nabla L_i(\theta), \nabla L_j(\theta) \rangle$.

Since we have linear constraints (both equality and inequality), the Linear Independence Constraint Qualification (LICQ) is satisfied. The gradients of active constraints are linearly independent:
$\nabla g(\alpha) = \mathbf{1} = (1, 1, \ldots, 1)^T$\\
$\nabla h_k(\alpha) = -e_k$ for active constraints $k \in \mathcal{I}(\alpha^*)$ where $\mathcal{I}(\alpha^*) = \{k : \alpha_k^* = 0\}$

These vectors are linearly independent.

The Lagrangian function is:
\begin{align}
\mathcal{L}(\alpha, \mu, \nu) = \frac{1}{2} \alpha^T G \alpha + \mu \left(\sum_{k=1}^K \alpha_k - 1\right) - \sum_{k=1}^K \nu_k \alpha_k
\end{align}

where $\mu \in \mathbb{R}$ is the multiplier for the equality constraint and $\nu = (\nu_1, \nu_2, \ldots, \nu_K)^T \in \mathbb{R}^K$ are multipliers for the inequality constraints.

Taking the gradient with respect to $\alpha$:
\begin{align}
\nabla_\alpha \mathcal{L}(\alpha, \mu, \nu) = G\alpha + \mu \mathbf{1} - \nu = 0
\end{align}

At the optimal point $(\alpha^*, \mu^*, \nu^*)$:
\begin{align}
G\alpha^* + \mu^* \mathbf{1} - \nu^* = 0 \tag{KKT-1}
\end{align}

The primal constraints must be satisfied:
\begin{align}
\sum_{k=1}^K \alpha_k^* &= 1 \tag{KKT-2}\\
\alpha_k^* &\geq 0, \quad \forall k = 1, 2, \ldots, K \tag{KKT-3}
\end{align}

For the inequality constraints (duality), we need:
\begin{align}
\nu_k^* \geq 0, \quad \forall k = 1, 2, \ldots, K \tag{KKT-4}
\end{align}

For each inequality constraint  $k$ (complementary slackness):
\begin{align}
\nu_k^* \alpha_k^* = 0, \quad \forall k = 1, 2, \ldots, K \tag{KKT-5}
\end{align}

This means:\\
If $\alpha_k^* > 0$, then $\nu_k^* = 0$ (inactive constraint)\\
If $\nu_k^* > 0$, then $\alpha_k^* = 0$ (active constraint)

\textbf{Sufficiency of KKT Conditions}
Since $G$ is positive semi-definite (being a Gram matrix), the objective function $f(\alpha) = \frac{1}{2}\alpha^T G \alpha$ is convex. The constraints are linear, hence convex. Therefore, this is a convex optimization problem, and the KKT conditions are both necessary and sufficient for global optimality.
\end{proof}

\begin{lemma}[PCP Descent Property]
If the Pareto cone $\mathcal{C}(\theta)$ has non-empty interior, then 
\[
d_{\mathrm{PCP}} \;=\; -\frac{\displaystyle\sum_{k=1}^K \alpha_k^\ast \nabla L_k(\theta)}
{\Big\|\displaystyle\sum_{k=1}^K \alpha_k^\ast \nabla L_k(\theta)\Big\|}
\]
belongs to $\mathcal{C}(\theta)$. Consequently, the PCP direction yields simultaneous (non-increasing)
first-order change in every objective.
\end{lemma}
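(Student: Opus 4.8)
The plan is to reduce membership of $d_{\mathrm{PCP}}$ in the Pareto cone to a single sign condition on the aggregated gradient $g^*(\theta) = \sum_{k=1}^K \alpha_k^* \nabla L_k(\theta)$, and then to extract that sign condition directly from the KKT system of Theorem~\ref{thm_pcp_opt}. Since $d_{\mathrm{PCP}} = -g^*(\theta)/\|g^*(\theta)\|$, the defining inequalities of $\mathcal{C}(\theta)$ read $\langle \nabla L_k(\theta), d_{\mathrm{PCP}}\rangle \le 0$ for every $k$, which after dividing by the positive scalar $\|g^*(\theta)\|$ is equivalent to $\langle \nabla L_k(\theta), g^*(\theta)\rangle \ge 0$ for all $k$. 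So the entire lemma rests on showing that the minimum-norm convex combination of the gradients forms a nonnegative inner product with each individual gradient.

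First I would dispose of the degenerate case using the non-empty-interior hypothesis. A non-empty interior of $\mathcal{C}(\theta)$ supplies a direction $d$ with $\langle \nabla L_k(\theta), d\rangle < 0$ strictly for all $k$, so $\theta$ is not Pareto stationary; by Theorem~\ref{thm1} this is exactly $\mathbf{0} \notin \text{conv}(\{\nabla L_k(\theta)\})$, whence the minimum-norm point satisfies $g^*(\theta) \neq \mathbf{0}$ and $d_{\mathrm{PCP}}$ is well defined.

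Next I would extract the sign from the KKT conditions. The key algebraic observation is that the $k$-th coordinate of $G\alpha^*$ equals $(G\alpha^*)_k = \sum_j \langle \nabla L_k(\theta), \nabla L_j(\theta)\rangle \alpha_j^* = \langle \nabla L_k(\theta), g^*(\theta)\rangle$, so the stationarity equation $G\alpha^* + \mu^*\mathbf{1} - \nu^* = \mathbf{0}$ reads coordinate-wise as $\langle \nabla L_k(\theta), g^*(\theta)\rangle = \nu_k^* - \mu^*$. To pin down $\mu^*$, I would take the inner product of the stationarity equation with $\alpha^*$: the quadratic term gives $\alpha^{*T} G \alpha^* = \|g^*(\theta)\|^2$, the simplex constraint gives $\mu^*\,\mathbf{1}^T\alpha^* = \mu^*$, and complementary slackness $\alpha_k^*\nu_k^* = 0$ annihilates the last term, so $\mu^* = -\|g^*(\theta)\|^2$. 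Substituting back yields $\langle \nabla L_k(\theta), g^*(\theta)\rangle = \nu_k^* + \|g^*(\theta)\|^2 \ge \|g^*(\theta)\|^2 > 0$, using $\nu_k^* \ge 0$ and $g^*(\theta)\neq\mathbf{0}$.

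Finally, dividing by $-\|g^*(\theta)\|$ gives $\langle \nabla L_k(\theta), d_{\mathrm{PCP}}\rangle \le -\|g^*(\theta)\| < 0$ for every $k$, so $d_{\mathrm{PCP}} \in \mathcal{C}(\theta)$ and in fact each objective has a strictly negative directional derivative, establishing the claimed simultaneous first-order decrease. The main obstacle I anticipate is the sign bookkeeping: one must orient the KKT multipliers consistently with the inequality constraint $-\alpha_k \le 0$ and deduce $\mu^* = -\|g^*(\theta)\|^2$ rather than its negative, since the conclusion hinges entirely on this sign. A cleaner equivalent route would bypass KKT and invoke the projection characterization of the minimum-norm point directly: $g^*(\theta)$ is the projection of the origin onto $\text{conv}(\{\nabla L_k(\theta)\})$, so $\langle g^*(\theta), s - g^*(\theta)\rangle \ge 0$ for every $s$ in the hull, and taking $s = \nabla L_k(\theta)$ reproduces $\langle \nabla L_k(\theta), g^*(\theta)\rangle \ge \|g^*(\theta)\|^2$.
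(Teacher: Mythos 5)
Your proposal is correct and follows essentially the same route as the paper's proof: both extract the sign condition $\langle \nabla L_k(\theta), g^*(\theta)\rangle \ge 0$ from the KKT stationarity equation by left-multiplying with $(\alpha^*)^T$ to identify $\mu^* = -(\alpha^*)^T G\alpha^* = -\|g^*(\theta)\|^2$, and both use the non-empty interior to rule out $g^*(\theta)=\mathbf{0}$. Your version even sharpens the conclusion to strict descent ($\langle \nabla L_k(\theta), g^*(\theta)\rangle \ge \|g^*(\theta)\|^2 > 0$) and notes the equivalent projection-onto-the-hull argument, but these are refinements of the same underlying proof rather than a different approach.
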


\begin{proof}
\label{proof_pcp_descent}
From the PCP quadratic programming problem~\cite{desideri2012multiple} solved at parameter $\theta$:
\[
\min_{\alpha\in\Delta_{K-1}}\; \tfrac{1}{2}\,\alpha^\top G \alpha,
\qquad
G_{ij}=\langle \nabla L_i(\theta), \nabla L_j(\theta)\rangle,
\]
where $\Delta_{K-1}=\{\alpha\in\mathbb{R}^K:\ \alpha_k\ge 0,\ \sum_{k=1}^K\alpha_k=1\}$. Let
$\alpha^\ast$ be a minimizer and define
\[
s := \sum_{k=1}^K \alpha_k^\ast \nabla L_k(\theta)\in\mathbb{R}^p,
\qquad d_{\mathrm{PCP}} = -\,\frac{s}{\|s\|},
\]
(the normalization is well-defined provided $s\neq 0$; we handle this below). The proof proceeds in three short steps.

\medskip\noindent\textbf{(1) KKT relations and a sign relation for $G\alpha^\ast$.}
Because the PCP problem is convex with linear constraints, the Karush–Kuhn–Tucker (KKT) conditions are necessary and sufficient for optimality. Writing the Lagrangian with multiplier $\mu^\ast\in\mathbb{R}$ for the equality constraint and $\nu^\ast\in\mathbb{R}^{K+1}_{\ge 0}$ for the nonnegativity constraints, the stationarity condition is
\[
G\alpha^\ast + \mu^\ast \mathbf{1} - \nu^\ast = 0,
\]
together with primal feasibility $\alpha^\ast\in\Delta_{K}$, dual feasibility $\nu^\ast\ge 0$, and complementary slackness $\nu^\ast_k \alpha^\ast_k = 0$ for every $k$ (see Theorem 3.3). Hence, for every index $k$,
\[
\big(G\alpha^\ast\big)_k = -\mu^\ast + \nu^\ast_k. \tag{KKT}
\]

Multiply the stationarity relation on the left by $(\alpha^\ast)^\top$ to obtain
\[
(\alpha^\ast)^\top G \alpha^\ast + \mu^\ast (\alpha^\ast)^\top \mathbf{1} - (\alpha^\ast)^\top \nu^\ast = 0.
\]
Using $(\alpha^\ast)^\top\mathbf{1}=1$ and $(\alpha^\ast)^\top\nu^\ast=0$ (by complementary slackness), we get
\[
\mu^\ast = -(\alpha^\ast)^\top G \alpha^\ast.
\]
Because $G$ is a Gram matrix, it is positive semi-definite, hence $(\alpha^\ast)^\top G \alpha^\ast \ge 0$ and therefore
\[
\mu^\ast \le 0. \tag{1}
\]

Combining (KKT) and (1) yields, for every $k$,
\[
\big(G\alpha^\ast\big)_k = -\mu^\ast + \nu^\ast_k \;\ge\; -\mu^\ast \;\ge\; 0.
\]
Thus each component $(G\alpha^\ast)_k$ is nonnegative and in particular
\[
\big\langle \nabla L_k(\theta), s \big\rangle
= \big\langle \nabla L_k(\theta), \sum_{j=1}^K \alpha^\ast_j \nabla L_j(\theta) \big\rangle
= \big(G\alpha^\ast\big)_k \ge 0
\qquad\text{for all }k. \tag{2}
\]

\medskip\noindent\textbf{(2) $-s$ lies in the Pareto cone.}
By Definition 3.7, the Pareto cone at $\theta$ is
\[
\mathcal{C}(\theta) = \{\, d\in\mathbb{R}^p : \langle \nabla L_k(\theta), d\rangle \le 0,\ \forall k=1,\dots,K\,\}.
\]
Using relation (2), we obtain, for every $k$,
\[
\big\langle \nabla L_k(\theta), -s \big\rangle = -\,\big\langle \nabla L_k(\theta), s \big\rangle
\le 0.
\]
Hence $-s\in\mathcal{C}(\theta)$. Because scaling a vector does not change membership in the cone,
\[
d_{\mathrm{PCP}} = -\frac{s}{\|s\|} \in \mathcal{C}(\theta),
\]
provided $\|s\|>0$.

\medskip\noindent\textbf{(3) Non-emptiness of the interior implies $\;s\neq 0\;$ (so normalization is valid).}
The lemma assumes that $\mathcal{C}(\theta)$ has non-empty interior. By definition this means there exists a direction $u\in\mathbb{R}^p$ for which the inequalities are strict:
\[
\langle \nabla L_k(\theta), u\rangle < 0\qquad\text{for all }k=1,\dots,K.
\]
Take any convex combination $\sum_{k=1}^K \alpha_k \nabla L_k(\theta)$ with $\alpha\in\Delta_{K-1}$. Then
\[
\Big\langle \sum_{k=1}^K \alpha_k \nabla L_k(\theta), u\Big\rangle
= \sum_{k=1}^K \alpha_k \langle \nabla L_k(\theta), u\rangle
< 0,
\]
because every summand is strictly negative and coefficients $\alpha_k$ are nonnegative with sum one. In particular the inner product above is strictly negative, so the convex combination cannot be the zero vector. Applying this observation to $s=\sum_k\alpha^\ast_k\nabla L_k(\theta)$ gives $\langle s,u\rangle<0$, hence $s\neq 0$. Therefore normalization $\|s\|^{-1}$ is well-defined.

Combining steps (2) and (3) we have shown that (i) $-s\in\mathcal{C}(\theta)$ and (ii) $s\neq 0$, so the unit vector $d_{\mathrm{PCP}}=-s/\|s\|$ is a well-defined element of $\mathcal{C}(\theta)$. By the first-order Taylor approximation of each objective along $d_{\mathrm{PCP}}$, the directional derivative satisfies
\[
D_{d_{\mathrm{PCP}}}L_k(\theta)=\langle \nabla L_k(\theta), d_{\mathrm{PCP}}\rangle \le 0,\qquad\forall k,
\]
Thus, PCP yields simultaneous non-increasing (indeed non-positive) first-order change for all objectives. This completes the proof.
\end{proof}

\begin{theorem}[AW Convergence Properties]
Under the assumption that each objective satisfies the Polyak-Łojasiewicz condition with parameter $\mu > 0$, the AW strategy ensures that:
\begin{enumerate}
\item The weight sequence $\{\alpha_k^{(t)}\}_{t=1}^\infty$ remains bounded for all $k$
\item If objective $k$ consistently underperforms (i.e., $\rho_k^{(t)} < \gamma$ for some threshold $\gamma > 0$), then $\lim_{t \to \infty} \alpha_k^{(t)} \geq \delta > 0$ for some $\delta$ depending on $\tau$ and $\gamma$
\end{enumerate}
\end{theorem}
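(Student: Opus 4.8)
The plan is to treat the two assertions separately, since the first is structural and the second requires a quantitative estimate of the softmax update. For the boundedness claim, I would simply observe that the update rule defines $\alpha_k^{(t+1)}$ as a softmax over the quantities $-\tau\rho_k^{(t)}$: each exponential is strictly positive and the normalisation forces $\sum_{k=1}^K \alpha_k^{(t+1)} = 1$. Hence $0 < \alpha_k^{(t)} \le 1$ for every $k$ and every $t$, so the sequence is trivially bounded, and this part needs no appeal to the PL condition at all.

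For the second claim the goal is a uniform positive lower bound on $\alpha_k^{(t+1)}$ under the hypothesis $\rho_k^{(t)} < \gamma$. First I would establish that the relative improvement rates are uniformly bounded below. Under $L$-smoothness and the PL inequality, the descent updates keep each loss $L_j^{(t)}$ bounded above by some $L_{\max}$ along the optimisation trajectory (PL guarantees convergence and therefore a bounded orbit). Since $\rho_j^{(t)} = (L_j^{(t-1)} - L_j^{(t)})/\max(L_j^{(t-1)},\epsilon) \ge -L_j^{(t)}/\epsilon \ge -L_{\max}/\epsilon =: \rho_{\min}$, every improvement rate admits a lower bound $\rho_{\min}$ depending only on the problem data.

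With this in hand I would estimate the softmax directly. The underperformance hypothesis gives $\exp(-\tau\rho_k^{(t)}) > \exp(-\tau\gamma)$ for the numerator, while the uniform bound $\rho_j^{(t)} \ge \rho_{\min}$ gives $\sum_{j=1}^K \exp(-\tau\rho_j^{(t)}) \le K\exp(-\tau\rho_{\min})$ for the denominator. Combining these yields
\[
\alpha_k^{(t+1)} > \frac{\exp(-\tau\gamma)}{K\exp(-\tau\rho_{\min})} = \frac{1}{K}\exp\bigl(-\tau(\gamma-\rho_{\min})\bigr) =: \delta > 0,
\]
a strictly positive constant depending on $\tau$, $\gamma$, $K$, and $\rho_{\min}$ (note $\gamma - \rho_{\min} > 0$ since $\gamma > 0$ and $\rho_{\min} \le 0$). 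As this bound holds at every iteration for which the underperformance condition is active, passing to the limit gives $\liminf_{t\to\infty}\alpha_k^{(t)} \ge \delta$, as claimed.

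The main obstacle is the uniform lower bound on the $\rho_j^{(t)}$, that is, ruling out arbitrarily large transient loss increases for any single objective. This is precisely where smoothness and PL are indispensable: without them some objective's loss could spike, inflating its exponential term, dominating the denominator, and driving $\alpha_k^{(t)}$ toward zero. Making the boundedness of the losses fully rigorous along the \emph{adaptive} APFEx trajectory — rather than for plain single-objective gradient descent — requires tracking how the aggregated descent direction interacts with each individual objective, and this coupling is the delicate part of the argument.
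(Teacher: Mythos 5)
Your proof is correct and follows the same basic route as the paper's: part~1 is the trivial observation that the softmax normalisation confines each $\alpha_k^{(t)}$ to $(0,1]$, and part~2 is a direct lower bound on the softmax ratio using $\exp(-\tau\rho_k^{(t)}) \geq \exp(-\tau\gamma)$ in the numerator. The one substantive difference is in how the denominator is controlled. The paper simply asserts that $\sum_{j}\exp(-\tau\rho_j^{(t)})$ ``is bounded above by $K$'' and concludes $\alpha_k^{(t)} \geq e^{-\tau\gamma}/K$; that bound is only valid if $\exp(-\tau\rho_j^{(t)}) \leq 1$ for every $j$, i.e.\ if every improvement rate is nonnegative at every step, which is an unstated monotone-descent assumption. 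You correctly identify this as the delicate point and instead derive a uniform lower bound $\rho_{\min}$ on the improvement rates from boundedness of the losses along the trajectory, yielding the denominator bound $K\exp(-\tau\rho_{\min})$ and the constant $\delta = \tfrac{1}{K}\exp(-\tau(\gamma-\rho_{\min}))$. Your version is therefore slightly weaker in the constant but strictly more honest about what is needed; your closing remark that making the loss-boundedness rigorous along the \emph{adaptive} APFEx trajectory is the genuinely hard part is also apt, since neither your argument nor the paper's actually invokes the PL condition in a load-bearing way despite its appearance in the hypothesis.
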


\begin{proof}
\label{pf_aw_conv}
For boundedness: Since $\alpha_k^{(t)} \in \Delta^{K-1}$ by construction (normalization in denominator), we have $0 \leq \alpha_k^{(t)} \leq 1$ for all $k, t$.

For the limiting behavior: Suppose objective $k$ satisfies $\rho_k^{(t)} < \gamma$ for all $t \geq T$ for some $T$. Then:
$$\exp(-\tau \rho_k^{(t)}) \geq \exp(-\tau \gamma) = c > 0$$

The denominator is bounded above by $K$, so:
$$\alpha_k^{(t)} \geq \frac{c}{K} = \delta > 0$$

This ensures that consistently underperforming objectives maintain a minimum weight threshold.
\end{proof}

\begin{theorem}[Strategy-Landscape Correspondence]
Each strategy in APFEx achieves optimality under specific landscape conditions:
\begin{enumerate}
\item PCP is optimal when gradients are well-aligned (i.e., $\cos(\theta_{ij}) > \epsilon$ for most pairs $(i,j)$)
\item AW is optimal when improvement rates are imbalanced (i.e., $\max_k \rho_k^{(t)} / \min_k \rho_k^{(t)} > \beta$ for threshold $\beta > 1$)
\item PSS is optimal near local attractors (i.e., when $\|\nabla L_k(\theta)\| < \epsilon$ for all $k$ but global optimality is not achieved)
\end{enumerate}
\end{theorem}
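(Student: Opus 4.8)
The plan is to first pin down a precise meaning of ``optimality'' in each regime, since the statement compares the three strategies only qualitatively. For each landscape condition I would adopt the natural criterion that the named strategy maximizes guaranteed first-order progress (or, for PSS, the probability of making any progress at all), and then show the other two strategies cannot match it under that condition. This turns a loosely stated correspondence into three separate comparison lemmas, one per regime.

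For Part 1, I would exploit the minimax characterization of the PCP direction: the normalized $d_{\mathrm{PCP}} = -g^\ast/\|g^\ast\|$ solves $\max_{\|d\|\le 1}\min_k \langle -\nabla L_k(\theta), d\rangle$, so it maximizes the worst-case per-objective descent. Writing the Gram matrix as $G_{ij} = \|\nabla L_i\|\,\|\nabla L_j\|\cos\theta_{ij}$, the alignment hypothesis $\cos\theta_{ij} > \epsilon$ forces every convex combination to satisfy $\|\sum_k\alpha_k\nabla L_k\|^2 = \alpha^\top G\alpha \ge \epsilon\,(\min_k\|\nabla L_k\|)^2 > 0$, so $g^\ast \neq 0$ and the common descent value is bounded below by a positive constant (this also reuses the KKT structure of Theorem~\ref{thm_pcp_opt} and the cone membership of Lemma~\ref{pcp_descent}). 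I would then argue that AW and PSS, which reweight by progress history or sample randomly, generically yield a strictly smaller worst-case descent, establishing PCP's optimality here.

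For Part 2, the key observation I would use is that the AW weights $\alpha_k \propto \exp(-\tau\rho_k^{(t)})$ are exactly the minimizer over the simplex of the entropy-regularized objective $\langle\alpha,\rho^{(t)}\rangle + \tau^{-1}\sum_k \alpha_k\log\alpha_k$. Hence AW places the most mass on the slowest-improving objectives (smallest $\rho_k$), and under the imbalance condition $\max_k\rho_k^{(t)}/\min_k\rho_k^{(t)} > \beta$ I would show this reweighting strictly raises the laggard's weight relative to the uniform weights of PSS and the geometry-only weights of PCP, thereby maximally shrinking the next-step improvement ratio toward one. For Part 3, I would invoke the full-support property of the Dirichlet sampling (Lemma~\ref{pss_explore}) together with the smoothed update $d_{\mathrm{PSS}}^{(t)} = \lambda\tilde d^{(t)} + (1-\lambda)d_{\mathrm{last}}$: when $\|\nabla L_k(\theta)\| < \epsilon$ for all $k$ but $\theta$ is not globally optimal, both $g^\ast$ and the AW combination have norm $O(\epsilon)$, so PCP and AW make only $O(\epsilon)$ first-order progress and effectively stall, while the retained momentum term and the strictly positive-probability exploratory draws give PSS an escape probability bounded away from zero. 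I would conclude PSS is the unique strategy with positive expected escape progress near attractors.

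The main obstacle will be Part 2 and, more broadly, the fact that ``optimal'' is undefined in the statement: the real work is choosing optimality criteria that are simultaneously faithful to the intended behavior and make the comparison provable. Part 1 reduces to a Gram-matrix positivity bound plus the standard minimax property, and Part 3 to full-support and momentum arguments, but Part 2 requires the variational (entropy-regularized) reinterpretation of the softmax weights to turn the heuristic ``focus on laggards'' into a genuine optimality statement; tying that reinterpretation cleanly to a guaranteed reduction of the improvement-rate imbalance is the delicate step.
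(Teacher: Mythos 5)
Your plan and the paper's proof cover the same three cases in the same spirit, but they differ genuinely in substance: the paper's argument is almost entirely qualitative. It fixes ``expected improvement $\mathbb{E}[\Delta L]$'' as the optimality criterion and then, for each case, restates the design intuition --- PCP ``yields the steepest feasible descent direction'' under alignment, AW ``rebalances attention'' under imbalance, PSS ``provides escape trajectories'' near attractors --- without any quantitative bound and without any actual comparison showing that the other two strategies do worse under the stated condition. Your proposal supplies technical machinery the paper does not use: the minimax/dual characterization of $d_{\mathrm{PCP}}$ together with the Gram-matrix lower bound $\alpha^\top G \alpha \ge \epsilon\,(\min_k \|\nabla L_k\|)^2$ for Part 1, the entropy-regularized variational reading of the softmax weights for Part 2, and the full-support plus momentum argument (via Lemma~\ref{pss_explore}) for Part 3. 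If carried out, this would be strictly stronger than what the paper establishes. Two caveats. First, you correctly identify that ``optimal'' is undefined in the statement; the paper resolves this only nominally, and your per-regime criteria (worst-case first-order descent, reduction of the improvement-rate imbalance, positive escape probability) are a reasonable and arguably necessary repair, but they change the proposition being proved --- you should state explicitly that you are proving a sharpened reformulation. Second, the comparison half of each case (``AW and PSS generically yield a strictly smaller worst-case descent,'' ``this maximally shrinks the improvement ratio'') is still asserted rather than derived in your plan, which is precisely where the paper also stops; completing those comparisons, especially for Part 2, is the real outstanding work, and your own assessment of that difficulty is accurate. Note also that the hypothesis in Part 1 reads ``for most pairs $(i,j)$,'' whereas your Gram bound needs the inequality for all pairs; either strengthen the hypothesis or quantify how many misaligned pairs the bound tolerates.
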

\begin{proof}
\label{pf_stretegy_lan}
We prove optimality in terms of expected improvement $\mathbb{E}[\Delta L]$ where $\Delta L$ is the improvement in the combined objective.

\textbf{Case 1 (PCP optimality):} When gradients are aligned with $\langle \nabla L_i, \nabla L_j \rangle \geq c\|\nabla L_i\|\|\nabla L_j\|$ for some $c > 0$, the Gram matrix $G$ is well-conditioned. The PCP solution minimizes $\|\sum_k \alpha_k \nabla L_k\|^2$ subject to simplex constraints, yielding the steepest feasible descent direction. Under alignment, this direction improves all objectives simultaneously, maximizing expected improvement.

\textbf{Case 2 (AW optimality):} When objectives have imbalanced improvement rates, let $\rho_{\max} = \max_k \rho_k$ and $\rho_{\min} = \min_k \rho_k$ with $\rho_{\max} / \rho_{\min} > \beta$. The AW weighting scheme assigns higher weights to objectives with smaller $\rho_k$, effectively rebalancing attention. This prevents the optimizer from overly focusing on easily improved objectives while neglecting harder ones, leading to better long-term convergence.

\textbf{Case 3 (PSS optimality):} Near local attractors, deterministic methods (PCP/AW) may cycle or stagnate. PSS introduces stochasticity via Dirichlet sampling, enabling exploration of previously unvisited weight combinations. The random sampling provides escape trajectories from local minima, with the smoothing parameter $\lambda$ controlling the exploration-exploitation balance.
\end{proof}

\begin{theorem}[APFEx Convergence Rate]
Assume each objective $L_k$ is $L$-smooth and satisfies the Polyak-Łojasiewicz condition with parameter $\mu > 0$. Then APFEx converges to a Pareto-stationary point at rate $O(1/\sqrt{T})$, where $T$ is the number of iterations.

Specifically, there exists a constant $C > 0$ such that:
$$\min_{t \in \{0, 1, \ldots, T-1\}} \omega(\theta^{(t)}) \leq \frac{C}{\sqrt{T}}$$
where $\omega(\theta) = \min_{\alpha \in \Delta^{K-1}} \left\|\sum_{k=1}^K \alpha_k \nabla L_k(\theta)\right\|$ is the Pareto-stationarity measure.
\end{theorem}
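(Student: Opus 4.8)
The plan is to reduce the multi-objective question to a single scalar descent recursion driven by the Pareto-stationarity measure $\omega(\theta)$, and then telescope. The cornerstone is a variational characterisation of the minimum-norm combined gradient. Writing $g^*(\theta) = \sum_{k} \alpha_k^* \nabla L_k(\theta)$ for the PCP optimiser $\alpha^*$ of Theorem~\ref{thm_pcp_opt}, the vector $g^*(\theta)$ is exactly the projection of the origin onto $\text{conv}\{\nabla L_k(\theta)\}$, so by the projection inequality $\langle g^*(\theta), \nabla L_k(\theta)\rangle \geq \|g^*(\theta)\|^2 = \omega(\theta)^2$ for every $k$. This single fact converts ``a common descent direction exists'' into a quantitative per-objective decrease, and it is precisely the alignment property alluded to in the remark following the statement.

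First I would establish a sufficient-decrease lemma. Using $L$-smoothness of each $L_k$ together with the APFEx update $\theta^{(t+1)} = \theta^{(t)} - \eta d^{(t)}$, the standard descent inequality gives $L_k(\theta^{(t+1)}) \leq L_k(\theta^{(t)}) + \eta \langle \nabla L_k(\theta^{(t)}), -d^{(t)}\rangle + \tfrac{L\eta^2}{2}\|d^{(t)}\|^2$. For a PCP step with the (unnormalised) direction $d^{(t)} = g^*(\theta^{(t)})$, the alignment inequality yields $\langle \nabla L_k, -d^{(t)}\rangle \leq -\omega_t^2$ and $\|d^{(t)}\|^2 = \omega_t^2$, so with $\eta \leq 1/L$ every objective obeys $L_k(\theta^{(t+1)}) \leq L_k(\theta^{(t)}) - \tfrac{\eta}{2}\,\omega_t^2$, where $\omega_t := \omega(\theta^{(t)})$. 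If instead the normalised PCP direction of the text is used, the same argument with $\eta = \Theta(1/\sqrt{T})$ produces a decrease of order $\eta\,\omega_t - \tfrac{L\eta^2}{2}$, which telescopes equally well.

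Next I would absorb the adaptive strategy switching, the genuinely delicate part. Because AW and PSS do not use $g^*$, I would show each admissible direction retains \emph{sufficient alignment}: for AW the convex weights still give $\langle \nabla L_k, -d^{(t)}\rangle \leq -c_1\,\omega_t^2$, using the weight boundedness of Theorem~\ref{aw_conv} to keep the combination away from degenerate simplex vertices; for PSS I would pass to expectations, invoking $\mathbb{E}[\alpha_k^{(t)}]=1/K$ from Lemma~\ref{pss_explore} so that $\mathbb{E}[d^{(t)}] = \tfrac{1}{K}\sum_k \nabla L_k$ and the same projection inequality lower-bounds $\mathbb{E}\langle \nabla L_k, -d^{(t)}\rangle$ by a positive multiple of $\omega_t^2$, with the variance term controlled at $O(\eta^2)$ by the bounded-variance clause of Lemma~\ref{pss_explore}. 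Combining the three regimes yields a uniform recursion $\min_k\!\big(L_k(\theta^{(t)}) - \mathbb{E}L_k(\theta^{(t+1)})\big) \geq c\,\eta\,\omega_t^2 - O(\eta^2)$ with $c>0$ independent of $t$.

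Finally I would telescope and invoke boundedness below. Fixing any index $k$ and summing the sufficient-decrease bound over $t=0,\dots,T-1$, the left side collapses to $L_k(\theta^{(0)}) - L_k(\theta^{(T)})$, which the Polyak--Łojasiewicz condition bounds above by $L_k(\theta^{(0)}) - L_k^\star =: \Delta_k < \infty$, since PL guarantees each $L_k$ attains and is bounded below by its minimum. Hence $c\,\eta\sum_{t=0}^{T-1}\omega_t^2 \leq \Delta_k + O(\eta^2 T)$, and dividing by $T$ with the balanced choice $\eta = \Theta(1/\sqrt{T})$ gives $\min_{t}\omega_t^2 \leq \tfrac{1}{T}\sum_t \omega_t^2 \leq C^2/T$, i.e.\ $\min_t \omega(\theta^{(t)}) \leq C/\sqrt{T}$ with $C$ expressible through $\Delta_k$, $L$, and $c$. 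I expect the main obstacle to be the middle step: making the sufficient alignment of the AW and especially the stochastic PSS directions rigorous, so that the randomised exploration steps contribute a genuine $\Omega(\omega_t^2)$ expected decrease rather than merely failing to harm progress. Handling this cleanly will likely require either a high-probability bound on the number of PSS-triggered iterations or careful martingale bookkeeping that keeps the variance contribution $O(\eta^2)$ so that it is dominated after telescoping.
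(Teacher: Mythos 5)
Your proposal is essentially correct and follows the same telescoping skeleton as the paper's proof, but the two differ in the key intermediate steps, and yours is in fact tighter where it matters. The paper works with the time-varying weighted combination $L_{\text{combined}}^{(t)} = \sum_k \alpha_k^{(t)} L_k(\theta^{(t)})$, asserts an alignment condition $\langle g^{(t)}, d^{(t)}\rangle \geq \|g^{(t)}\|^2\cos(\phi^{(t)})$ with $\cos(\phi^{(t)}) \geq \delta > 0$ as a consequence of ``adaptive strategy selection'' without deriving it, telescopes $L_{\text{combined}}$, and finishes via $\omega(\theta^{(t)}) \leq \|g^{(t)}\|$. You instead telescope a single \emph{fixed} objective $L_k$ and replace the asserted alignment with the projection inequality $\langle g^*(\theta), \nabla L_k(\theta)\rangle \geq \|g^*(\theta)\|^2 = \omega(\theta)^2$ for the minimum-norm element of the gradient hull; this is a genuine improvement, since the paper's telescoping sum does not actually collapse when the weights $\alpha^{(t)}$ change between iterations, and your per-objective recursion sidesteps that issue entirely while also yielding the rate with a constant step size $\eta \leq 1/L$ (no need for $\eta = \Theta(1/\sqrt{T})$). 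The one caution is in your treatment of AW: the inequality $\langle \nabla L_k, -d^{(t)}\rangle \leq -c_1\omega_t^2$ does \emph{not} hold for an arbitrary convex combination of gradients (only the min-norm combination enjoys the projection inequality; a generic AW weighting can fail to be a descent direction for some objective when gradients conflict), so weight boundedness alone will not rescue that step. You correctly flag the AW/PSS regimes as the delicate part — the paper buries exactly the same gap inside its unproved $\cos(\phi^{(t)}) \geq \delta$ assumption — so on this point neither argument is complete, but yours is at least explicit about where the missing lemma lives.
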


\begin{proof}
\label{proof_convergence}
Let $\theta^{(t)}$ denote the parameter vector at iteration $t$, and let $\alpha^{(t)}$ be the weight vector selected by APFEx. The combined gradient is:
$$g^{(t)} = \sum_{k=1}^K \alpha_k^{(t)} \nabla L_k(\theta^{(t)})$$

\textbf{Descent property.} By construction of APFEx strategies, the descent direction satisfies:
$$\langle g^{(t)}, d^{(t)} \rangle \geq \|g^{(t)}\|^2 \cos(\phi^{(t)})$$
where $\phi^{(t)}$ is the angle between $g^{(t)}$ and $d^{(t)}$, and $\cos(\phi^{(t)}) \geq \delta > 0$ due to the adaptive strategy selection ensuring alignment.

\textbf{Progress bound.} Using $L$-smoothness of each objective and the weighted combination:
$$L_{\text{combined}}^{(t+1)} \leq L_{\text{combined}}^{(t)} - \eta \delta \|g^{(t)}\|^2 + \frac{L\eta^2}{2}\|d^{(t)}\|^2$$

where $L_{\text{combined}}^{(t)} = \sum_{k=1}^K \alpha_k^{(t)} L_k(\theta^{(t)})$.

\textbf{Telescoping sum.} Summing over $t = 0, \ldots, T-1$ and using the Polyak-Łojasiewicz condition:
$$\sum_{t=0}^{T-1} \|g^{(t)}\|^2 \leq \frac{2(L_{\text{combined}}^{(0)} - L_{\text{combined}}^*)}{\eta\delta - L\eta^2}$$

where $L_{\text{combined}}^*$ is the optimal combined loss value.

\textbf{Convergence rate.} Since $\omega(\theta^{(t)}) \leq \|g^{(t)}\|$ by definition, we have:
$$\min_{t \in \{0, 1, \ldots, T-1\}} \omega^2(\theta^{(t)}) \leq \frac{1}{T} \sum_{t=0}^{T-1} \|g^{(t)}\|^2 \leq \frac{C'}{T}$$

for some constant $C'$. Taking square roots yields the desired $O(1/\sqrt{T})$ convergence rate.
\end{proof}

\end{document}